\documentclass[11pt]{article}

\usepackage[letterpaper,margin=0.95in]{geometry}
\usepackage[T1]{fontenc}
\usepackage[utf8]{inputenc}
\usepackage{lmodern}
\usepackage{microtype}
\usepackage{float}
\usepackage{needspace}

\usepackage{amsmath,amssymb,amsthm,mathtools}
\usepackage{graphicx}
\usepackage{subcaption}
\usepackage{booktabs}
\usepackage{xcolor}
\usepackage[most]{tcolorbox}
\usepackage{hyperref}
\usepackage[numbers,sort&compress]{natbib}
\theoremstyle{definition}
\newtheorem{definition}{Definition}[section]
\usepackage{algorithm}
\usepackage{algpseudocode}
\usepackage{booktabs}
\usepackage{multirow}

\definecolor{tildeBlack}{HTML}{0F0F0F}
\definecolor{tildeMuted}{HTML}{666666}
\definecolor{tildeLight}{HTML}{F2F3F5}
\definecolor{tildeBlue}{HTML}{1A73E8}

\hypersetup{
  colorlinks=true,
  linkcolor=black,
  citecolor=blue,
  urlcolor=black
}

\usepackage{amsthm}
\usepackage{booktabs}
\usepackage{multirow}
\usepackage{siunitx}

\newtheorem{prop}{Proposition}[section]
\newtheorem{theorem}{Theorem}[section]

\usepackage{pifont}

\newcommand{\xmark}{\ding{55}}

\usepackage{abstract} 
\newcommand{\tildelogo}{%
  \includegraphics[height=1.1em]{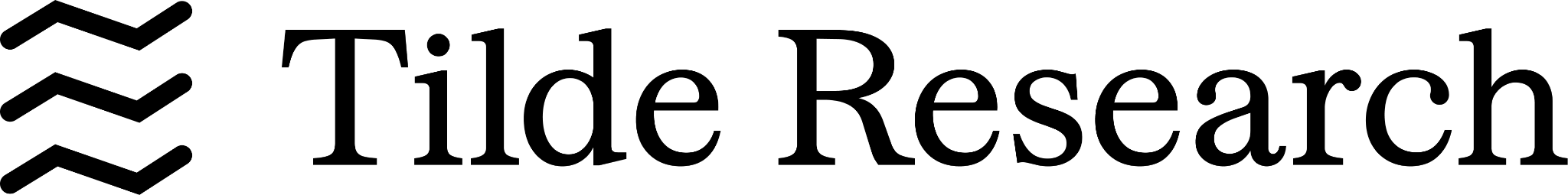}%
}

\newtcolorbox{herobox}{
  enhanced,
  colback=tildeLight,
  colframe=tildeLight,
  boxrule=0pt,
  arc=14pt,
  outer arc=14pt,
  left=20pt,
  right=20pt,
  top=18pt,
  bottom=16pt,
  width=\textwidth
}

\newcommand{\paperdate}{June 26, 2026}
\newcommand{\blogurl}{https://blog.tilderesearch.com/blog/aurora}
\newcommand{\codeurl}{https://github.com/tilde-research/aurora-release}

\newcommand{\auroraheader}{
\noindent
\begin{minipage}[c]{0.5\textwidth}
\tildelogo
\end{minipage}
\begin{minipage}[c]{0.5\textwidth}
\raggedleft{\large \paperdate}
\end{minipage}

\noindent\rule{\textwidth}{0.8pt}

\vspace{1.6em}

{\centering
\LARGE\bfseries
Aurora: A Leverage-Aware Spectral Optimizer\par
}

\vspace{0.8em}

\noindent\rule{\textwidth}{0.8pt}

\vspace{1.0em}

\begin{center}
{\large\bfseries
Alec Dewulf\footnote{Correspondence to alec@tilderesearch.com},
Dhruv Pai,
Li Yang,
Ashley Zhang,
Ben Keigwin
\par}

\vspace{0.8em}

{\large Tilde Research\par}

\vspace{0.8em}

\begin{minipage}{0.72\textwidth}
\raggedright
Blogpost: \url{\blogurl}\\
Code: \url{\codeurl}
\end{minipage}
\end{center}

\vspace{0.6em}
}

\begin{document}

\auroraheader

\begin{abstract}
We show that for tall matrix parameters, like projection matrices in the MLP layers, the Muon update can have row norms that are arbitrarily non-uniform. 
%, which causes large variance in per-neuron learning rates in the MLP layers. 
This can lead to a self-reinforcing feedback loop whereby neurons receive persistently small updates and eventually do not contribute meaningfully to network outputs. This problem is effectively mitigated by an additional row normalization step, but current methods do this in a way that moves the Muon update geometry away from the polar factor of the momentum matrix, which we find is undesirable. We propose Aurora, an optimizer that enforces row-uniformity of matrix parameter updates while respecting Muon's polar factor geometry. Aurora outperforms Muon in our pre-training experiments and, when combined with existing methods, achieves state-of-the-art performance among spectral optimizers on the optimizer track of the \texttt{modded-nanoGPT} speedrun. Additionally, we find that Aurora's empirical gains over Muon scale with the MLP expansion factor, suggesting that Aurora may allow for effective training of very wide MLP layers.
\end{abstract}

\vspace{2.0em}

% ------------------------------------------------------------
% Main paper
% ------------------------------------------------------------

\section{Introduction}
Muon~\cite{jordan2024muon} has become an increasingly popular optimizer for training language models up to frontier scale~\citep{deepseekai2026deepseekv4, kimiteam2026kimi25, singh2026arcee, glm5team2026glm5vibecodingagentic}. The core operation in Muon is an orthogonalization routine that approximately replaces a matrix-valued update with its polar factor, preserving the update's singular vectors while removing its singular value scales.

Muon is part of a growing family of \textit{spectral optimizers} which apply some explicit transformation to the spectrum of the update matrix. Most spectral optimizers use orthogonalization, but recent work has explored the possibility of using softer singular value maps, which shrink large singular values less aggressively \cite{li2026intrinsic}. Other spectral optimizers add conditioning steps, normalization, and adaptive rescaling, with some variants showing significantly improved convergence speed on community research benchmarks like \texttt{modded-nanoGPT}~\citep{modded_nanogpt_2024, chang2026muoneq, si2025adamuonadaptivemuonoptimizer}.

One such example is NorMuon \cite{li2025normuonmakingmuonefficient}, which rescales each row of the polar factor by its inverse RMS norm. This row normalization step was proposed as a mechanism to control variance in the update's row norms for rectangular matrices, which may be large under Muon. We show that this variance can lead to significant under-utilization of parameters in the MLP layers, and that in some cases neurons can become effectively dead. We find that row normalization effectively addresses this issue but at the cost of modifying the geometry of the Muon update, which is undesirable both empirically and under the Muon theoretical formulation.

We propose the \textbf{Aurora} optimizer as a way to control row norm variance in neuron updates without sacrificing precision of the polar factor computation. The Aurora update rule is specifically for the up and gate projection matrices in the MLP layers. %, which we show can receive uneven updates under Muon.
Aurora effectively prevents neuron death in all our experiments and results in faster convergence speeds compared to Muon and other row-normalized spectral optimizers. Additionally, we find that Aurora's empirical gains over Muon scale with the MLP expansion factor, suggesting that Aurora may allow for effective training of networks with unusually wide MLP layers.

%We stress that Aurora is intended to \textit{unify} the strengths of row-normalized spectral optimizers rather than to add to the growing list of variants. We provide extensive documentation of the neuron death phenomenon and a mechanistic analysis of how it can occur in the hope that practitioners will be able to quickly decide whether Aurora is suitable for their setting.

%We present extensive evidence that Aurora achieves high polar factor precision and effectively distributes update magnitude across rows

\paragraph{Contributions.} Our contributions are as follows.
\begin{enumerate}
    \item \textbf{Analysis of normalized spectral optimizers.} We show that post-hoc unit row normalization can move the Muon update geometry far from that of the true polar factor; additionally, we show that row normalization is not required for square or wide parameters under precise orthogonalization. This leads to the design of U-NorMuon, a simple modification of NorMuon that restricts row normalization to tall matrices and removes additional row-norm buffers. U-NorMuon outperforms Muon in our experiments, providing evidence for the hypothesis that row normalization and precise orthogonalization are concurrently desirable.
    %\item \textbf{Unifying formulation of row-normalized Muon variants.}
    \item \textbf{A study of neuron death under Muon.} We find that networks trained with Muon can have significant neuron death in the MLP layers. We present a mechanistic explanation for how neurons can die under Muon and show that this issue is effectively prevented by U-NorMuon. To verify our results, we study open-weight models trained with Muon and find strong evidence of neuron death in the dense MLP layers.
    \item \textbf{Aurora and Riemannian-Aurora.} We propose Aurora as a way to enforce uniform row norms while strictly adhering to the Muon theoretical framework. We additionally derive Riemannian-Aurora as a reference solver. Riemannian-Aurora forms updates in the tangent space of the intersection of the Stiefel manifold and the equal-row-norm manifold, more faithfully solving this dual-constraint problem at a higher cost. We find that the iterative solution used by the practical instantiation Aurora is often also very accurate in practice.
    \item \textbf{Extensive pre-training verification.} Our models trained with Aurora achieve lower downstream validation loss and higher benchmark scores than Muon and NorMuon at 340M and 1.1B scales. Notably, the 1.1B model we trained with Aurora scores 9.1 points higher on MMLU than the Muon baseline.  On the \texttt{modded-nanoGPT} speedrun (optimizers track), Aurora improves on the NorMuon baseline, and combined with existing methods it sets a new state-of-the-art at the time of writing. We additionally find that its performance gains over Muon increase with MLP width.
\end{enumerate}

\paragraph{Organization.} In Section~\ref{sec:background}, we provide background on Muon and row-normalized spectral optimizers. We additionally discuss recent research on algorithms for computing the polar factor in Muon. In Section~\ref{sec:motivation}, we show that polar factor precision and uniform row norms are concurrently desirable. We present U-NorMuon as an intermediate step to Aurora, which applies stateless normalization only to the tall matrix parameters. In Section~\ref{sec:aurora}, we formulate the dual-constraint problem, which enforces both semi-orthogonality and row-norm uniformity, and present Aurora and Riemannian-Aurora as solutions. Finally, in Section~\ref{sec:training}, we present empirical verification for Aurora and then we discuss the broader implications of our work, along with future directions in Section~\ref{sec:discussion}.

\section{Preliminaries and Related Work}\label{sec:background}

\textbf{Notation.} For a matrix $A \in \mathbb R^{m \times n}$, we denote its Frobenius norm by $\|A\|_F$,
and its spectral norm by $\|A\|_2$. The singular values of $A$ are denoted by
$\sigma_1(A) \ge \cdots \ge \sigma_{\min\{m,n\}}(A)$. The Hadamard product is
denoted by $\odot$. For a square matrix $S$, $\operatorname{diag}(S)$ denotes
the vector of diagonal entries of $S$, while for a vector $x$,
$\operatorname{diag}(x)$ denotes the diagonal matrix with diagonal $x$.
We write $\operatorname{sym}(S) = \frac{1}{2}(S+S^\top)$. For $A$ with thin SVD $A = U_r \Sigma V^\top$, we define $\operatorname{polar}(A) = U_r V^\top$. We use $d_{\mathrm{model}}$ to denote the dimension of the model's residual stream.

\subsection{Muon and its Normalized Spectral Optimizers}
Muon is an optimizer for weight matrices in the hidden layers. Its update can be viewed as the steepest first-order descent direction in a spectral-norm trust region, which is given by the polar factor of the gradient momentum. For a weight matrix $W \in \mathbb{R}^{m \times n}$ with gradient momentum $M_t$ at step $t$, Muon forms the update
\[
\Delta W_t = -\eta \operatorname{polar}(M_t) =-\eta U V^\top,\quad M_t = U \Sigma V^\top.
\]
where $\eta$ is a constant learning rate. Another perspective is that Muon performs steepest descent with respect to the spectral norm on matrix-valued parameters. Adam(W) is typically used for parameters that do not naturally represent linear maps between Euclidean spaces, such as RMSNorm scaling factors and embedding/unembedding matrices.
%Equivalently, $\operatorname{polar}(M_t)
% \in \arg\max_{\|X\|_2 \leq 1} \langle M_t, X \rangle$ is the solution to the spectral-norm
% constrained steepest descent problem $
% .

Recently, a suite of ``normalized'' Muon variants has become popular on community research benchmarks. These optimizers add an explicit row or column normalization step to the Muon algorithm either before or after the polar factor computation. For example, NorMuon maintains per-row second-moment statistics which it uses to rescale rows of the orthogonalized update matrix. NorMuon was motivated from the observation that row norms in the Muon update can be non-uniform for tall matrices. We will build on this result, showing that this non-uniformity in updates can lead to significant parameter under-utilization in the MLP layers.

MuonEq instead rescales the momentum matrix before orthogonalization, which can improve the convergence of Newton-Schulz iterations. Aurora is similar to an iterated version of MuonEq-R; however, Aurora is only applied to tall matrix parameters, whereas MuonEq-R is applied to all matrix parameters. We find that both of these differences--only normalizing updates to tall parameters, and iteratively refining the update--individually improve downstream performances.

%We find that the added benefit of MuonEq heavily depends on the precision of the NS iteration being used and that modern NS routines often converge precisely on the raw gradient momentum matrix without any additional conditioning. % TODO: need to add this.

% \begin{table}[H]
% \centering
% \renewcommand{\arraystretch}{1.4}
% \begin{tabular}{@{}ll@{}}
% \toprule
% Optimizer & Update direction \\
% \midrule
% Muon    & $\operatorname{polar}(M_t)$ \\
% MuonEq  & $\operatorname{polar}\!\big(\operatorname{diag}(\rho_t)^{-1} M_t\big)$ \\
% NorMuon & $\operatorname{diag}\!\big(\sqrt{v_t}+\epsilon\big)^{-1}\operatorname{polar}(M_t)$ \\
% \bottomrule
% \end{tabular}
% \caption{Update directions for Muon and two row-normalized variants, for a tall momentum buffer $M_t \in \mathbb{R}^{m \times n}$ ($m > n$).}
% \label{tab:muon-variants}
% \end{table}

%TODO: I don't think i'm happy with this para and I'm not sure that it's necessary at all
A related but distinct line of work uses normalization to improve the memory and compute efficiency of optimizers, applying stateless row- or column-wise rescaling to gradients as a lightweight surrogate for adaptive preconditioning, or for orthogonalization itself~\citep{ma2024swan, scetbon2025gradientmultinormalization, glentis2025minimalist, wen2025sron,xu2026widthscaling,deng2026rmnp}.  Aurora's row-normalization step is mechanically similar to several of these methods but has a different motivation. Aurora row-normalizes \textit{and} orthogonalizes the update, achieving a precise row-uniform polar factor. As we show in Section~\ref{sec:normalization}, there is a natural tension between row normalization and orthogonalization for tall matrices; resolving that tension is the central design problem that Aurora addresses.

%The motivation for Aurora is most closely tied to that of NorMuon though we arrive at a mechanism that is in part similar to some of the other optimizers described above. 

%Although Aurora is closest in motivation to NorMuon, its mechanism connects to this broader normalization literature by treating row-wise update scale as a first-class optimization variable.

%methods replace expensive optimizer state or matrix preconditioning with stateless row/column normalization, whitening, or multi-norm gradient scaling. 

% A separate line of work motivates column/row normalization from the perspective of improving memory and compute efficiency. SWAN statelessly preprocesses gradients with normalization and whitening, and Gradient Multi-Normalization frames stateless optimization as normalization with respect to multiple norms. SCALE shows that column-wise normalization can improve the performance of plain SGD, and SRON studies the complementary row-wise variant, motivated by large disparities in gradient scale across rows of weight matrices.

% RMNP replaces NS iteration with row-wise momentum normalization, which can be interpreted as a structured approximation to the Muon preconditioner. Mano proposes a manifold-inspired optimizer that projects momentum relative to normalized parameters and constrains the update direction with a rotating Oblique-manifold normalization, equivalent to alternating row-wise and column-wise normalization across steps.

\subsection{Algorithms for Computing the Polar Factor}

%Practical Muon implementations use Newton--Schulz-type iterations to approximate the polar factor using only matrix multiplications, often by implicitly applying an inverse-square-root iteration to the Gram matrix of the momentum.

A \textit{Newton-Schulz (NS) iteration} is a fixed-point recurrence of the form $X_{t + 1} = p(X_t)$ for some matrix polynomial $p$. Practical Muon implementations usually use an NS-type iteration to approximate the polar factor, often by implicitly applying an inverse-square-root iteration to the Gram matrix. For a matrix $G \in \mathbb{R}^{m \times n}$, these methods produce an approximation $\widehat P_T(G) \approx \operatorname{polar}(G)$ after $T$ iterations. The existence of matmul-only algorithms for computing the polar factor is largely what makes Muon feasible at scale.

Prior work finds that more precise NS iterations (i.e. iterations producing $\widehat P_T(G)$ that is closer to $\operatorname{polar}(G)$) lead to better downstream performance but that gains diminish quite quickly beyond a point of saturation~\citep{kim2026convergencemuonnewtonschulz}. We will consider three commonly used algorithms in this paper: Jordan's quintic polynomial with five steps (quintic-5)~\citep{jordan2024muon}, Polar Express with eight steps (PE-8)~\citep{amsel2025polarexpress} and CANS with degree-3 and twelve steps (CANS-12)~\cite{grishina2025accelerating}. CANS-12 and PE-8 both require 24 matrix multiplications, while quintic-5 requires only 15. We find that this difference in computation results in a negligible  wall-clock overhead in our distributed Muon setting.% We choose these step counts following community standards.% and our own ablations which show performance saturation at or near these points (Appendix~X)

%Quintic-5 is the cheapest of these algorithms but is less precise than CANS-12 and PE-8 (Figure~\ref{fig:ns_polys}). We find that using PE-8 or CANS-12 results in only a very modest wall-clock overhead in our distributed Muon implementation.  %TODO: I need an appendix ablation if I want to say this. something better here probably

%In practice, it can be difficult to compare different NS iterations with small iteration budgets because convergence often depends on the shape and conditioning of the input matrix. For the purposes of this study, we evaluate NS routines using the validation loss they achieve when used with Muon in our training setting. Our setup and choice of model architecture are all very standard, so we expect results to transfer cleanly to other settings.

\begin{figure}[H]
    \centering
    \includegraphics[width=0.7\linewidth]{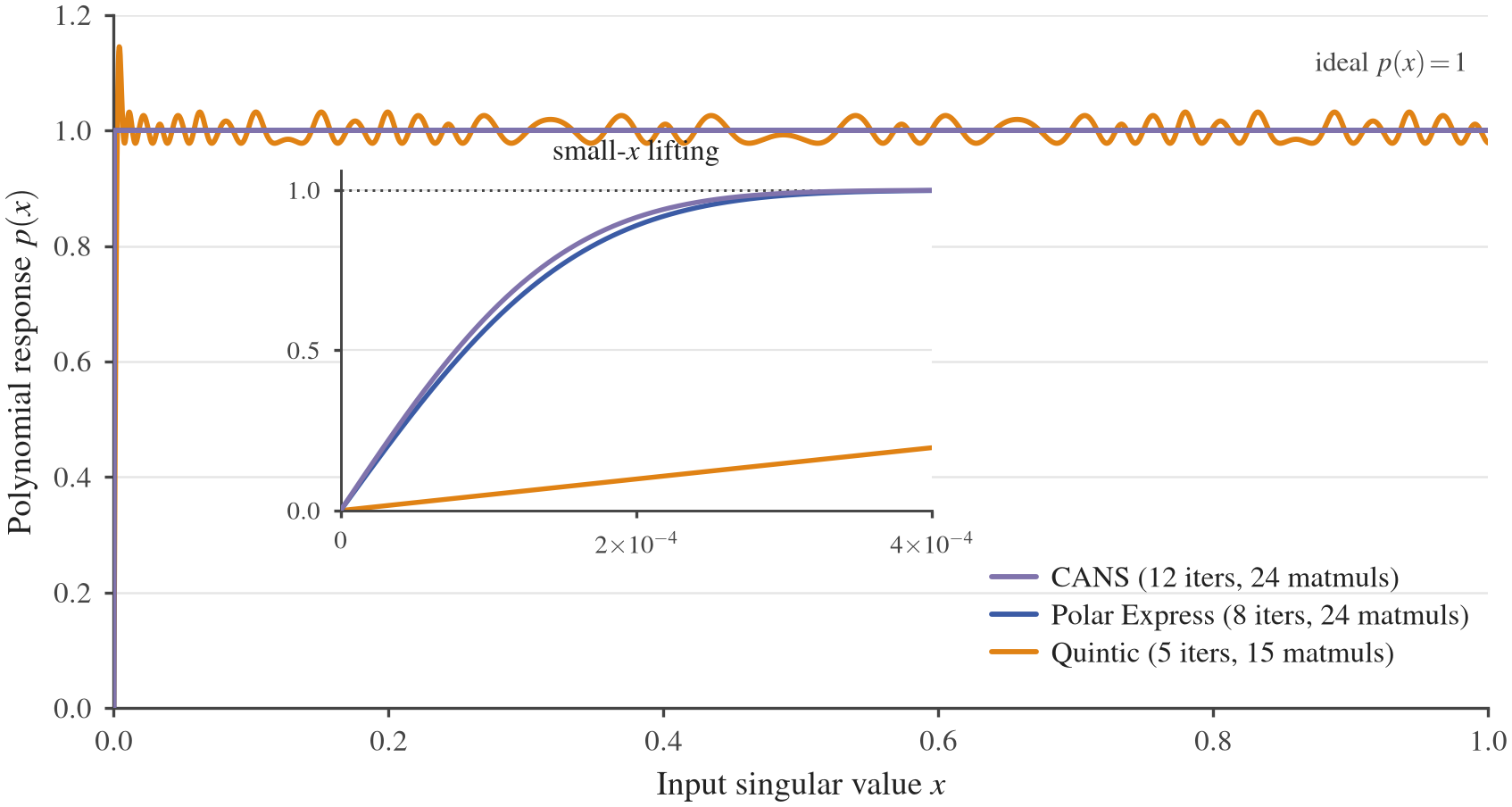}
    \caption{Plotting CANS-12, quintic-5 and PE-8 for singular values between zero and one. Quintic-5 has the largest error, and CANS-12 converges slightly better than PE-8 for small singular values.}
    \label{fig:ns_polys}
\end{figure}

%It is hard to make an absolute comparison of different NS iterations because convergence largely depends on the input's shape and conditioning. For the purposes of this study, we will compare these methods using downstream validation loss and empirical approximation error.

Intuitively, more precise methods map singular values closer to one on average. CANS-12 and PE-8 are both very accurate for large singular values, but for small singular values, CANS-12 can be noticeably more precise (Figure~\ref{fig:ns_polys}). In Section~\ref{sec:normalization}, we will empirically measure the extent of this precision gap in our training settings.

\section{Polar Precision and Neuron Update Imbalance}\label{sec:motivation}
We study different NS iterations in our  settings and find monotonically lower validation loss for higher polar factor precisions. Motivated by this result, we study the effect of normalization on the polar factor, and show that current row/column normalization schemes necessarily reduce polar precision for tall and wide matrix parameters respectively (Section~\ref{sec:normalization}).

% maybe we need graph of ||polar(G) - NS(G)||
We then study parameter efficiency in networks trained with Muon and find evidence of significant neuron death in the MLP layers. We show that row normalization of the Muon update effectively mitigates this pathology (Section~\ref{sec:death}), leading to improved downstream evaluation results.

We verify the generality of our results by inspecting open-weight models for dead neurons (Section~\ref{sec:open_weight}). In all cases, we find a non-trivial cohort of dead neurons particularly concentrated in the early layers. Consistent with our proposed mechanistic explanation, we do not find dead neurons in fine-grained experts in the models we inspect.

%that a small but highly consistent set of neurons in an open-weight Muon-trained model are dead by our criteria, concentrated in the early layers and stable across evaluation datasets.

%Before studying normalized variants, we first verify that polar-factor precision
%is relevant in our training setting. Holding all other details fixed, we vary only
%the orthogonalization routine used by Muon. Higher-precision routines achieve
%lower validation loss, suggesting that perturbations away from the polar factor
%can have measurable downstream consequences. This motivates our goal of
%correcting row update imbalance without sacrificing polar precision.

\subsection{Analyzing Normalization Schemes}\label{sec:normalization}
It is hard to make an absolute comparison of different NS iterations because their convergence largely depends on the input's shape and conditioning. For the purposes of this study, we will compare the accuracy of these methods using the following empirical error measure.

\begin{definition}[Polar approximation error (PAE)]
For a matrix \(X \in \mathbb{R}^{m \times n}\) and a Newton--Schulz iteration
\(\operatorname{A}\), let \(\operatorname{A}_k(X)\) denote the result of applying \(\operatorname{A}\) to \(X\) for \(k\)
steps. We define the \textit{polar approximation error} of \(\operatorname{A}\) on \(X\) after
\(k\) iterations as
\[
\varepsilon_{k,\operatorname{A}}(X)
\coloneqq
\frac{
\left\|\operatorname{A}_k(X)-\operatorname{polar}(X)\right\|_{\mathrm F}
}{
\left\|\operatorname{polar}(X)\right\|_{\mathrm F}
}.
\]
\end{definition}
We train a vanilla 340M transformer with Muon using different NS iterations and compute the average polar approximation error over all matrix parameters updated by Muon for each step of training (full training details given in Section~\ref{sec:training}). Intuitively, this metric captures the average error introduced by approximate orthogonalization. We additionally run Muon with singular value decomposition (SVD) as a perfect-accuracy baseline.

\begin{table}[H]
\centering
\begin{tabular}{lrrrrr}
\toprule
Algorithm & \(k\) & Val Loss  & Mean PAE & Max PAE & PAE Std \\
\midrule
Quintic       & 5  & 2.607 & 0.1556 & 0.9244 & 0.1809 \\
Polar Express & 8  & 2.584 & 0.0298 & 0.7136 & 0.0618 \\
CANS          & 12 & 2.583 & 0.0286 & 0.7064 & 0.0601 \\
SVD & -- & 2.577 & 0.0017$^{\dagger}$ & 0.0017 & -- \\
% muon_svd_340M                               8  polar_express       0.001660      0.001668      0.000002      96    480096    
\bottomrule
\end{tabular}
\caption{Polar approximation error statistics and validation loss computed across 340M training trajectories with different NS iterations and SVD. $^\dagger$SVD achieves a non-zero PAE because training is done in BF16.}
\label{tab:polar-approx-error}
\end{table}

We find a clean correspondence between lower PAE and lower validation loss. CANS-12 achieves both the best validation loss and mean PAE of all the polynomials we test, but is outperformed by SVD. PE-8 performs similarly to CANS-12, and both of these algorithms do significantly better than quintic-5, which is much less precise. %These results provide strong verification that higher polar precision is empirically desirable.

%where $G_{A,p}^{(i)}$ is the raw gradient for parameter $p$ at training step $i$ and $\Theta$ is the set of all parameters that are updated by the spectral optimizer. This metric captures the average precision error of polar factors computed in a particular training run. We use a vanilla transformer and standard training presets in an effort to maximize the transferability of our results. %We emphasize that the gradients under $A_k$ and $B_q$ will almost always be different and that $G_i$ depends on $G_j$ for all $j < i$. 

\begin{figure}[H]
    \centering
    \includegraphics[width=0.8\linewidth]{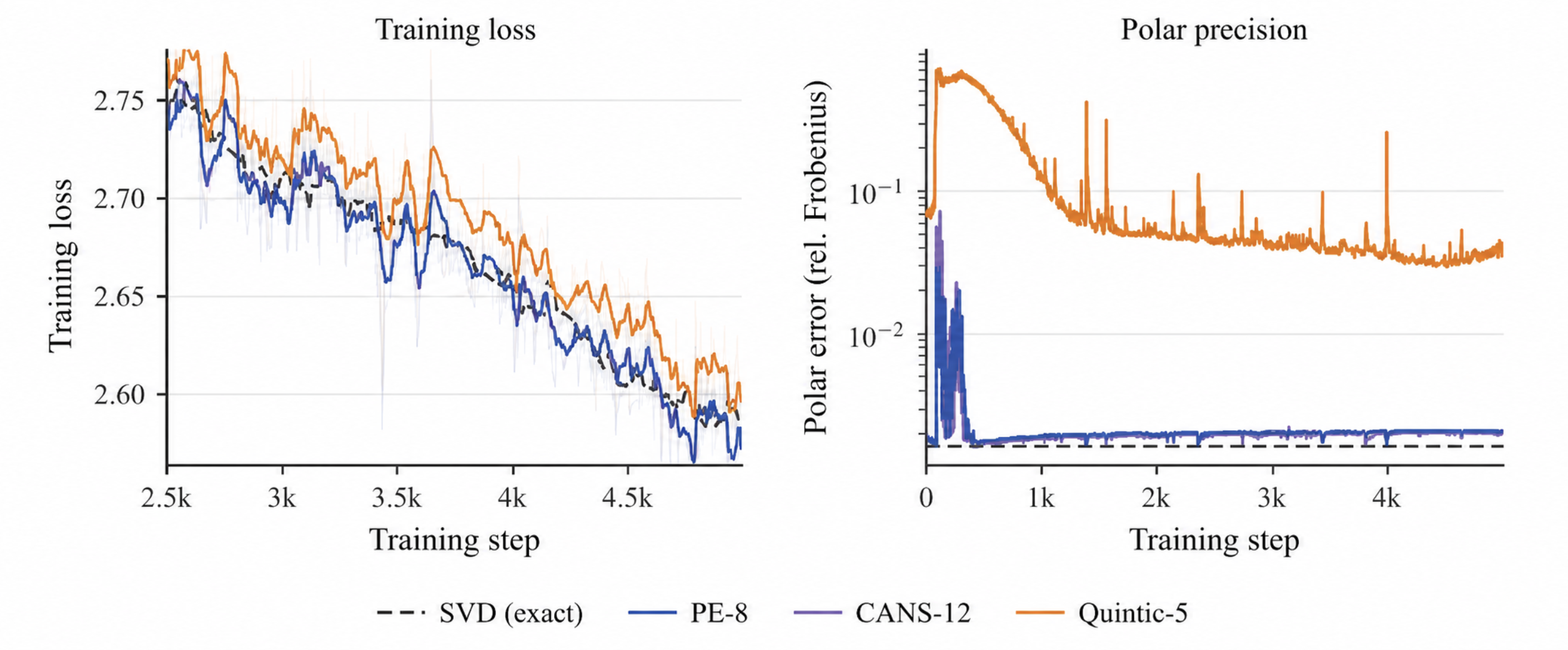}
    \caption{Loss curves and polar approximation error graphs for different NS iterations in our 340M Muon setting. Quintic-5 gets a much larger polar error and worse training loss than both PE-8 and CANS-12. SVD achieves the best and most stable training loss.}
    \label{fig:340M_poly}
\end{figure}

We analyze the effect of different row-normalization schemes and find that methods enforcing unit row norms necessarily change the geometry of the polar factor for tall matrices. NorMuon smooths its row-normalization weights with an exponential moving average and therefore does not always produce exactly unit row norms, but we still treat it as belonging to this class. Symmetrically, column-normalized variants of these optimizers induce a larger approximation error for \textit{wide} matrix updates. 

\begin{prop}\label{claim:tall}
Let $A \in \mathbb{R}^{m_A \times n_A}$ be a tall matrix parameter such that $m_A > n_A$. The matrix $A$ cannot be both column-orthogonal and have unit row norms. Similarly, the wide matrix $B \in \mathbb{R}^{m_B \times n_B}$ with $m_B < n_B$ cannot be both row-orthogonal and have unit column norms. Lastly, if $C \in \mathbb{R}^{m_C \times n_C}$ is orthogonal and $m_C = n_C$, then its rows and columns must both have unit norm.
\end{prop}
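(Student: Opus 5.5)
The plan is a trace-counting argument: compute $\|A\|_F^2$ in two ways and compare. Column-orthogonality of a tall $A\in\mathbb{R}^{m_A\times n_A}$ means $A^\top A = I_{n_A}$, so
\[
\|A\|_F^2 = \operatorname{tr}(A^\top A) = n_A .
\]
If every row of $A$ also had unit norm, then summing squared row norms would give
\[
\|A\|_F^2 = \sum_{i=1}^{m_A} \|a_i\|_2^2 = m_A .
\]
Hence $m_A = n_A$, which contradicts the tallness assumption $m_A > n_A$.

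A more informative version identifies the squared row norms as the leverage scores $\operatorname{diag}(AA^\top)$ of the projector $AA^\top$. These scores lie in $[0,1]$ and sum to $n_A$. Unit row norms would therefore force $AA^\top$ to have unit diagonal, and hence (being a projector) to equal $I_{m_A}$. That requires rank $m_A > n_A$, again impossible. I would mention this view since it ties to the ``leverage-aware'' theme. It also shows that the best achievable uniform row norm is $\sqrt{n_A/m_A}$.

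The wide case follows by applying the tall case to $B^\top$. Row-orthogonality of $B$ means $BB^\top = I_{m_B}$, so $B^\top$ is tall with orthonormal columns. The columns of $B$ are the rows of $B^\top$, so unit column norms for $B$ are exactly unit row norms for $B^\top$, which the first part rules out.

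For the square case, $C^\top C = I$ gives unit column norms directly from the diagonal. Since $C$ is square, $C^\top$ is a two-sided inverse of $C$, so $CC^\top = I$ as well, and its diagonal gives unit row norms.

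The only step needing any care is the square case: we must justify that a one-sided inverse of a square matrix is two-sided. This is standard finite-dimensional linear algebra, since injectivity implies bijectivity. If ``orthogonal'' is read as already meaning $C^\top C = CC^\top = I$, the step is immediate. No results from earlier in the paper are needed.
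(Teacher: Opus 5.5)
Your proof is correct and follows the paper's argument. Like the paper, you compute $\|A\|_F^2$ both as $\operatorname{tr}(A^\top A) = n_A$ and as the sum of squared row norms $= m_A$, and you handle the square case by noting that $C^\top C = I$ forces $CC^\top = I$. The only differences are cosmetic: you get the wide case by applying the tall case to $B^\top$, where the paper repeats the trace count with $BB^\top$, and your leverage-score remark is a correct but optional aside.
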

\indent All proofs are given in Appendix~\ref{sec:proofs}.

Thus, there is a ``competition'' between post-hoc normalization and the polar factor computation in unit-normalized spectral optimizers. When the momentum buffer has large variance in row norms, the precision reduction due to unit row-normalization can be significant. In particular, the loss of precision can be greater than the precision gap between quintic-5 and CANS-12.%, which we found leads to poorer performance for models trained with Muon (Table~\ref{tab:polar-approx-error}).

We study this in a simple synthetic experiment in which we sample tall matrices with different row-norm spreads and examine the effect of unit row-normalization on polar factor precision. As the row-norm spread $\sigma$ grows (Figure~\ref{fig:unit_row}), the input becomes increasingly heavy-tailed across rows. This degrades the precision of the NS-computed polar factor. The additional orthogonality defect introduced by unit row normalization is largest at intermediate $\sigma$, where the raw NS update remains relatively precise but row normalization substantially changes its geometry.

\begin{figure}[H]
    \centering
    \includegraphics[width=0.9\linewidth]{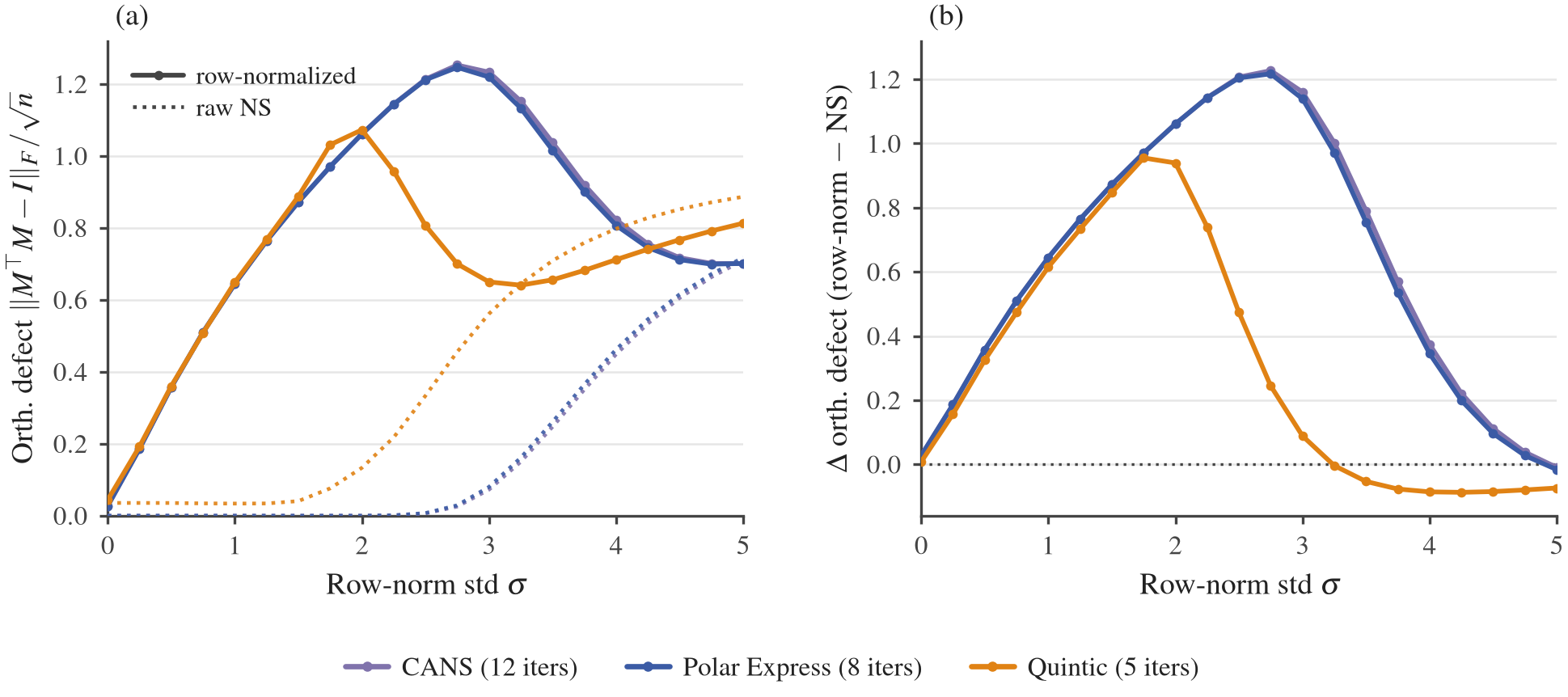}
    \caption{
Effect of unit row normalization on orthogonality for random $512\times128$ matrices with log-normal row norms. Rows are sampled as random unit directions and scaled by $r_i=\exp(\sigma z_i)$, where $z_i\sim\mathcal{N}(0,1)$. \textbf{(a)} Orthogonality defect before and after unit row normalization. \textbf{(b)} Additional defect introduced by row normalization.
}
    \label{fig:unit_row}
\end{figure}

Proposition~\ref{claim:tall} also implies that normalization of wide/square matrices becomes redundant under more precise NS iterations. For optimizers that maintain state for normalization variables (e.g. momentum buffers for row norms), normalization may even be harmful if the state becomes stale at some point in training. We find that NS iterations tend to be less precise early in training when raw gradients are more likely to be ill-conditioned (Figure~\ref{fig:340M_poly}), allowing for the accumulation of normalization statistics that quickly become stale. In this case, normalization may make rows/columns \textit{less} uniform.

\begin{prop}\label{claim:stateless}
    Let $M_t \in \mathbb{R}^{m \times n}$ be the momentum buffer at training step $t$ and let $O_t = \operatorname{polar}(M_t)$ be the precise Muon update. Let $r_{t,i}
=
\operatorname{RMS}\!\left((O_t)_{i,:}\right)$ be the RMS norm of the $i^{th}$ row of $O_t$ and assume $r_{t, i} > 0 \quad \forall i \in [m]$. If $\widehat O_t$ is the \textit{stateless} row-normalized update defined as
\[
\widehat O_{t,i,:}
=
\frac{(O_t)_{i,:}}{r_{t,i}},
\]
then $\operatorname{RMS}(\widehat O_t)=1$.
\end{prop}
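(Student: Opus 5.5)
The plan is a direct computation from the definition of the RMS norm; no earlier result beyond the definitions is needed. The only conventions to fix are that for a row vector $x \in \mathbb{R}^{n}$ we take $\operatorname{RMS}(x) = \|x\|_2/\sqrt{n}$, and for a matrix $X \in \mathbb{R}^{m\times n}$ we take $\operatorname{RMS}(X) = \|X\|_F/\sqrt{mn}$. The hypothesis $r_{t,i} > 0$ for all $i$ guarantees that $\widehat O_t$ is well defined.

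The first step is to compute the RMS norm of each row of $\widehat O_t$. By homogeneity of the norm,
\[
\operatorname{RMS}\bigl(\widehat O_{t,i,:}\bigr) = \frac{\operatorname{RMS}\bigl((O_t)_{i,:}\bigr)}{r_{t,i}} = 1
\quad\text{for every } i \in [m],
\]
so every row satisfies $\|\widehat O_{t,i,:}\|_2^2 = n$.

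The second step is to decompose the squared Frobenius norm by rows:
\[
\|\widehat O_t\|_F^2 = \sum_{i=1}^m \|\widehat O_{t,i,:}\|_2^2 = mn .
\]
This gives $\operatorname{RMS}(\widehat O_t) = \sqrt{mn/(mn)} = 1$, which proves the claim.

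The argument never uses that $O_t$ is a polar factor. The only point needing care is to state the RMS conventions explicitly: the result depends on normalizing both rows and the full matrix by the number of entries, and it would fail, for example, if the matrix were measured by $\|X\|_F$ without the $1/\sqrt{mn}$ factor. What makes the statement relevant is that the scale is exactly $1$ at every step, with no dependence on stale statistics. This is the contrast with the EMA-based normalization discussed before Proposition~\ref{claim:stateless}, though that contrast is not part of the proof.
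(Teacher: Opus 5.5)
Your proposal is correct and follows the paper's proof essentially line for line: each normalized row has RMS $1$, so each has squared norm $n$, and summing over the $m$ rows gives $\|\widehat O_t\|_F^2 = mn$, hence global RMS $1$. Your remark that the polar structure of $O_t$ is never used is also accurate; the paper's argument likewise relies only on the definitions of RMS.
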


By Proposition~\ref{claim:stateless}, the updates produced by stateless row-normalization naturally have unit RMS norm, matching the update RMS of Muon with spectral scaling~\cite{liu2025muon}. However, the row normalization operation can significantly alter the polar factor geometry and may amplify singular directions that would normally be given small weight under Muon. We experiment with \textit{polar scaling}, which maintains the RMS norm of the polar factor after row rescaling; for NorMuon, this amounts to rescaling the update by a factor of $1/\sqrt{m}$. We find polar scaling to be more effective in our small-scale experiments.
%Additionally, normalization can modify parameter update scale significantly. For example, the true Muon update for a tall matrix parameter $M \in \mathbb{R}^{m \times}$ satisfies,
% $$\|O\|_F^2 = n, \quad O = \operatorname{polar}(M)$$
% NorMuon then scales each row of $O$ to have approximately unit RMS norm, producing the update $\hat{O}$. The scale of this update will be larger by a factor of roughly $m$.
% \[
% \|\widehat O_{i,:}\|_2^2 \approx n
% \qquad \text{for each } i \in [m],
% \]
% and therefore
% \[
% \|\widehat O\|_F^2
% =
% \sum_{i=1}^m \|\widehat O_{i,:}\|_2^2
% \approx
% mn.
% \]
% We hypothesize that this scale mismatch may negatively impact the learning dynamics, especially for very tall matrices which will get much larger updates under NorMuon.

Combining these observations yields U-NorMuon, which applies row normalization to the momentum polar factor for updates to tall matrices, and otherwise uses Muon for the wide/square hidden weight matrices. We leave the choice of scaling for updates to tall parameters as a hyperparameter; unless otherwise specified, we use polar scaling which performed the best in our setting; see Appendix~\ref{app:u_normuon} for a full ablation. Parameters optimized with Muon use spectral scaling in all cases. Because it is stateless, U-NorMuon uses $O(m)$ less memory than NorMuon for each tall matrix.

\begin{algorithm}[H]
\caption{U-NorMuon}
\label{alg:u-normuon}
\begin{algorithmic}[1]
\Require Initial weights $\mathbf W_0 \in \mathbb R^{m \times n}$, loss $L$, learning rate $\eta$, momentum coefficients $\beta$, weight decay $\lambda$, numerical constant $\epsilon$, tall update scaling $s : \mathbb{R}^{2} \rightarrow \mathbb{R}$
\State Initialize $\mathbf M_0 \in \mathbb R^{m \times n} \gets \mathbf 0$
\For{$t = 1,2,\ldots$}
    \State $\mathbf G_t \gets \nabla_{\mathbf W} L(\mathbf W_t)$
    \State $\mathbf M_t \gets \beta_1 \mathbf M_{t-1} + (1-\beta_1)\mathbf G_t$
    \State $\mathbf O_t \gets \operatorname{polar}(\mathbf M_t)$
    \If{$m > n$} \Comment{Only row-normalize tall matrices}
        %\State $\mathbf v_t \gets \beta_2 \mathbf v_{t-1}
        %     + (1-\beta_2)\operatorname{mean}_{\mathrm{cols}}(\mathbf O_t \odot \mathbf O_t)$
        % \State $\mathbf V_t \gets \operatorname{ExpandRows}(\mathbf v_t)$ \Comment{$\mathbf V_t \in \mathbb R^{m \times n}$}
        \State $\mathbf r_t \gets \operatorname{mean}_{\mathrm{cols}}(\mathbf O_t \odot \mathbf O_t)$
        \State $\mathbf R_t \gets \operatorname{ExpandRows}(\mathbf r_t)$
        \State $\widehat{\mathbf O}_t \gets \mathbf O_t \oslash \left(\sqrt{\mathbf R_t} + \epsilon\right)$
        %\State $\gamma \gets 1/\sqrt{m}$ \label{step:polar_scale}\Comment{Polar scaling}
        \State $\gamma \gets s(n, m)$ \Comment{Polar or spectral scaling}
    \Else
        \State $\widehat{\mathbf O}_t \gets \mathbf O_t$
        \State $\gamma \gets \sqrt{n}$ \Comment{Spectral scaling}
    \EndIf
    \State $\mathbf W_{t+1} \gets \mathbf W_t - \eta \lambda \mathbf W_t - \eta \gamma \cdot \widehat{\mathbf O}_t$
\EndFor
\end{algorithmic}
\end{algorithm}

U-NorMuon outperforms both NorMuon and Muon in our 340M transformer setting where all optimizers use CANS-12 for NS. Additionally, both row-normalized optimizers outperform our Muon baseline, providing isolated evidence that tall matrices benefit from row-uniform updates.
%We also individually ablate all of our design choices and find U-NorMuon performs the best (Appendix~\ref{app:u_normuon}). 

 % sIn Section~\ref{sec:aurora}, we build on U-NorMuon, showing that we can achieve uniform row norms while preserving the polar factor geometry.

\begin{figure}[H]
    \centering
    \includegraphics[width=0.6\linewidth]{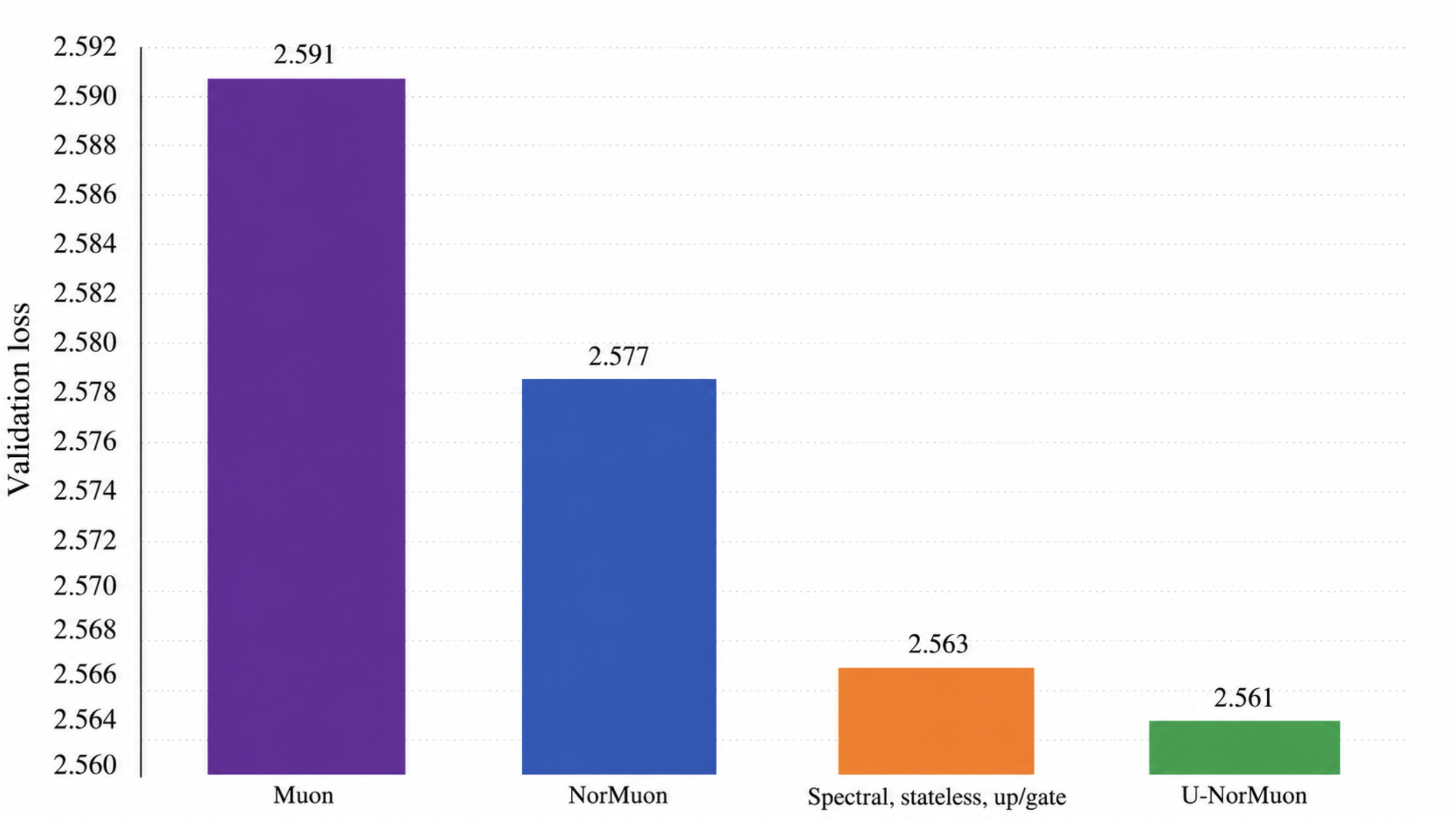}
    \caption{U-NorMuon achieves better downstream loss than both our Muon and NorMuon baselines at 340M scale. We additionally plot the spectral scaling ablation, which achieves slightly higher validation loss than U-NorMuon which uses polar scaling on tall matrix updates by default.}
    \label{fig:u_normuon}
\end{figure}

Motivated by these results, we seek to understand why updates to tall parameters can benefit from row normalization. We build on prior work by showing mechanistically how neurons can receive persistently small learning signal under Muon and we find extensive evidence that this occurs in our settings and in open-weight models, sometimes resulting in neuron death.

\subsection{Neuron Death Under Muon}\label{sec:death}
Motivated by activation-based definitions of dead and dormant neurons
\citep{lu2020dyingrelu, sokar2023dormant, dohare2024lossofplasticity},
and by gradient-based saliency criteria for pruning
\citep{lee2019snip, wang2020grasp, molchanov2019importance},
we define an \textit{optimizer-relative dead component} as one that persistently receives
both low effective gradient signal and low realized update magnitude. We stress that some model components may get small updates/raw gradients at different points in training but still contribute meaningfully to model outputs. We distinguish these components from genuinely dead ones by their gradient/update norm patterns \textit{across} training, which we find tend to be meaningfully different for dead and alive components.

% \begin{definition}[Optimizer-relative dead component]
% Let \(\theta = \{\theta^{(c)}\}_{c\in\mathcal C}\) be a decomposition of a parameter tensor into components, such as the rows of an MLP projection matrix. At training step \(t\), let
% \[
% g_t^{(c)}=\nabla_{\theta^{(c)}}\mathcal L_t,
% \qquad
% u_t^{(c)}=\theta_{t+1}^{(c)}-\theta_t^{(c)}
% \]
% denote the raw gradient and realized optimizer update for component \(c\). Define the step-normalized gradient and update magnitudes
% \[
% \bar g_t^{(c)}
% =
% \frac{\|g_t^{(c)}\|_2}
% {\operatorname{median}_{c'\in\mathcal C}\|g_t^{(c')}\|_2+\epsilon},
% \qquad
% \bar u_t^{(c)}
% =
% \frac{\|u_t^{(c)}\|_2}
% {\operatorname{median}_{c'\in\mathcal C}\|u_t^{(c')}\|_2+\epsilon}.
% \]
% For a training window \(I\), we say that component \(c\) is
% \((\tau_g,\tau_u,\rho)\)-optimizer-dead on \(I\) if
% \[
% \frac{1}{|I|}
% \sum_{t\in I}
% \mathbf 1
% \left\{
% \bar g_t^{(c)}\leq \tau_g
% \;\;\text{and}\;\;
% \bar u_t^{(c)}\leq \tau_u
% \right\}
% \geq \rho .
% \]
% \end{definition}

A defining feature of MLP layers is that their neurons are computed independently along rows. Because the MLP nonlinearity acts coordinate-wise, each hidden unit $h_i$ is a function of only the
$i^{th}$ rows of the input projections, so the latent space decomposes into per-neuron computations which do not share parameters. This row structure is precisely what makes MLPs vulnerable to gradient suppression and neuron death, since when the update to a row is small relative to the others, the corresponding neuron learns slowly or stops contributing altogether independently of the rest of the layer. Since Muon orthogonalizes the gradient matrix as a whole, understanding which neurons are amplified or suppressed reduces to understanding how Muon transforms the row norms of the raw gradient/momentum buffer. 

Concretely, we focus on the up and gate projections which are both tall matrix parameters. In a SwiGLU MLP~\cite{shazeer2020gluvariants} the $i^{th}$ coordinate of the latent for $x \in \mathbb{R}^{d_{model}}$ given by
$$h_i = \operatorname{SiLU}((W_{gate})_{i,:}x) \cdot (W_{up})_{i,:} x$$
In particular, the neuron $h_i$ only depends on $W_{gate}$ and $W_{up}$ through their $i^{th}$ rows. The same holds for an arbitrary coordinate-wise nonlinearity $\phi : \mathbb{R}^r \rightarrow \mathbb{R}$; we can write the $i^{th}$ hidden coordinate as
\[
    h_i = \phi(z_{i,1},\ldots,z_{i,r}),
    \qquad
    z_{i,j} = (W^{(j)})_{i,:}x,
\]
where \(W^{(1)},\ldots,W^{(r)}\) are the input projections. For example, in a SwiGLU MLP we have $r=2$ with $W^{(1)} = W_{up}$, $W^{(2)} = W_{gate}$; in a ReLU$^p$ MLP for some $p > 0$, we have $r = 1$ and $W^{(1)} = W_{up}$. The coordinate $h_i$ depends on these matrices only through their $i^{th}$ rows. Our empirical results in this section are all for models using SwiGLU MLPs but we show the results are broadly the same for the $\operatorname{ReLU}^2$ activation in Appendix~\ref{app:relu}.

We will use \textit{leverage scores} to formalize the connection between row norms in the raw gradient matrix and the corresponding neuron-level updates under Muon. %Leverage scores quantify the relative strength of a row's participation in the update subspace. 

\begin{definition}[Leverage scores]
For a matrix $M \in \mathbb R^{m \times n}$ with thin SVD
$ M = U_r  \Sigma  V^\top$, define the leverage
score of row $i$ as
\[
\ell_i(M) = \|(U_r)_{i,\cdot}\|_2^2.
\]
\end{definition}

Unlike gradient row norms, row leverage scores are not influenced by singular-value scale, which is made approximately uniform under Muon. They quantify the relative strength with which a row participates in the column space of the update matrix. Second, for tall matrices, the leverage score of a row in the momentum is exactly the squared norm of the corresponding row in the polar factor.

%Because idealized Muon updates a parameter by $\operatorname{polar}(M_{t, W})$, a row's leverage score in the momentum buffer \emph{is} the magnitude of the Muon update to the corresponding neuron. The following bound then shows that rows with small gradient signal are assigned small leverage, and hence small updates.

\Needspace{8\baselineskip}
\begin{prop}\label{claim:leverage}
    Let $M \in \mathbb{R}^{m \times n}$ where $m > n$ with thin SVD $M = U_r\Sigma V^T$. Suppose that for some $\epsilon > 0$ row $i$ satisfies $\|M_{i, \cdot}\| \leq \epsilon$, then the row's leverage score satisfies 
    $$\ell_i(M) = \|\operatorname{polar}(M)_{i, :}\|^2 \leq \epsilon^2/\sigma_n(M)^2,$$ 
    where $\sigma_n(M)$ is the smallest singular value of $M$.
\end{prop}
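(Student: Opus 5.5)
The plan is to prove both parts of the statement directly from the thin SVD, treating the equality and the inequality separately. We assume $\sigma_n(M) > 0$, i.e.\ $M$ has full column rank, since otherwise the right-hand side is undefined. Then $r = n$, $U_r \in \mathbb{R}^{m\times n}$ has orthonormal columns, and $\Sigma$ and $V$ are $n\times n$, with $V$ orthogonal and $\Sigma$ invertible.

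For the equality, write $\operatorname{polar}(M)_{i,:} = (U_r)_{i,:}V^\top$. Because $V$ is orthogonal, multiplying on the right by $V^\top$ preserves Euclidean norms of row vectors. Hence
\[
\|\operatorname{polar}(M)_{i,:}\|^2 = \|(U_r)_{i,:}\|^2 = \ell_i(M),
\]
by the definition of leverage scores.

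For the inequality, the first step is to express the rows of $U_r$ through the rows of $M$. Since $MV = U_r\Sigma$, we have $U_r = M V\Sigma^{-1}$, and so $(U_r)_{i,:} = M_{i,:}V\Sigma^{-1}$. Taking norms and using submultiplicativity with the spectral norm gives
\[
\|(U_r)_{i,:}\| \le \|M_{i,:}\|\,\|V\|_2\,\|\Sigma^{-1}\|_2 = \|M_{i,:}\|/\sigma_n(M) \le \epsilon/\sigma_n(M).
\]
Squaring both sides yields the claimed bound.

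None of these steps is a real obstacle. The only point needing care is the full-rank assumption, which the statement uses implicitly through $\sigma_n(M)$. If one wants to cover the rank-deficient case as well, the same argument works with $\sigma_r$ in place of $\sigma_n$, using $U_r = MV\Sigma^{-1}$ where $V$ now has $r$ orthonormal columns.
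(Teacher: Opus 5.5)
Your proof is correct and follows essentially the same route as the paper: both obtain the equality from the fact that right-multiplication by $V^\top$ preserves row norms, and both derive the inequality from the thin SVD under a full-column-rank assumption. The only difference is presentational. You write $(U_r)_{i,:} = M_{i,:}V\Sigma^{-1}$ and bound it using $\|\Sigma^{-1}\|_2 = 1/\sigma_n(M)$, whereas the paper lower-bounds $\|M_{i,:}\| = \|U_{i,:}\Sigma\| \ge \sigma_n(M)\|U_{i,:}\|$, which is the same inequality read in the other direction.
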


%Let $M_t$ be the gradient first momentum buffer for a tall matrix parameter (i.e. up or gate projection) at step $t$ and suppose $M_t$ has thin SVD $M_t = U_r \Sigma V^T$. By Claim~\ref{claim:leverage}, if $\|(M_t)_{i, \cdot}\| \leq \epsilon$ then $\ell_i(M_t) \propto \epsilon^2$. So in particular, if row $i$ has small norm in the raw momentum then $\epsilon$ is small and row $i$ also has small leverage. Muon will form the update $\operatorname{polar}(M_t) = U_rV^{\top}$, where $\|(U_r)_{i, \cdot}V^{\top}\|_2^2 = \|(U_r)_{i, :}\|^2_2 = \ell_i(M_t)$.

If $M_{t, W} \in \mathbb{R}^{m \times n}$ is the gradient momentum for a parameter $W$ at step $t$, then an idealized version of Muon will update $W$ according to $W \gets \operatorname{polar}(M_{t, W})$. Suppose that $W$ is a tall parameter and that $W_{i, :}$ has received persistently small gradients in recent steps, implying that $\|(M_{t, W})_{i, :}\|_2^2$ is small. Then by Proposition~\ref{claim:leverage}, when $M_{t, W}$ is reasonably well-conditioned, row $i$ will also have small magnitude in the Muon update. Thus, neuron update magnitude non-uniformity is preserved by Muon when $\sigma_n(M_{t,W})$ is not too small.
%Suppose that $M_t$ is the gradient momentum buffer for a tall matrix parameter (i.e. up or gate projection) at step $t$ and suppose $M_t$ has thin SVD $M_t = U_r \Sigma V^T$. By Claim~\ref{claim:leverage}, if $\|(M_t)_{i, \cdot}\| \leq \epsilon$ then $\ell_i(M_t) = \|\\propto \epsilon^2$. 

This predicts that the Muon momentum buffer can accumulate persistently small neuron-level updates, which we find occurs in our settings. We find a strong correspondence between neurons receiving small updates and neurons with low row leverage, indicating that the bound in Proposition~\ref{claim:leverage} is tight (Figure~\ref{fig:dead_correlation}). U-NorMuon eliminates these dead neurons in all of our settings and achieves lower downstream validation loss.

\begin{figure}[H]
    \centering
    \includegraphics[width=0.7\linewidth]{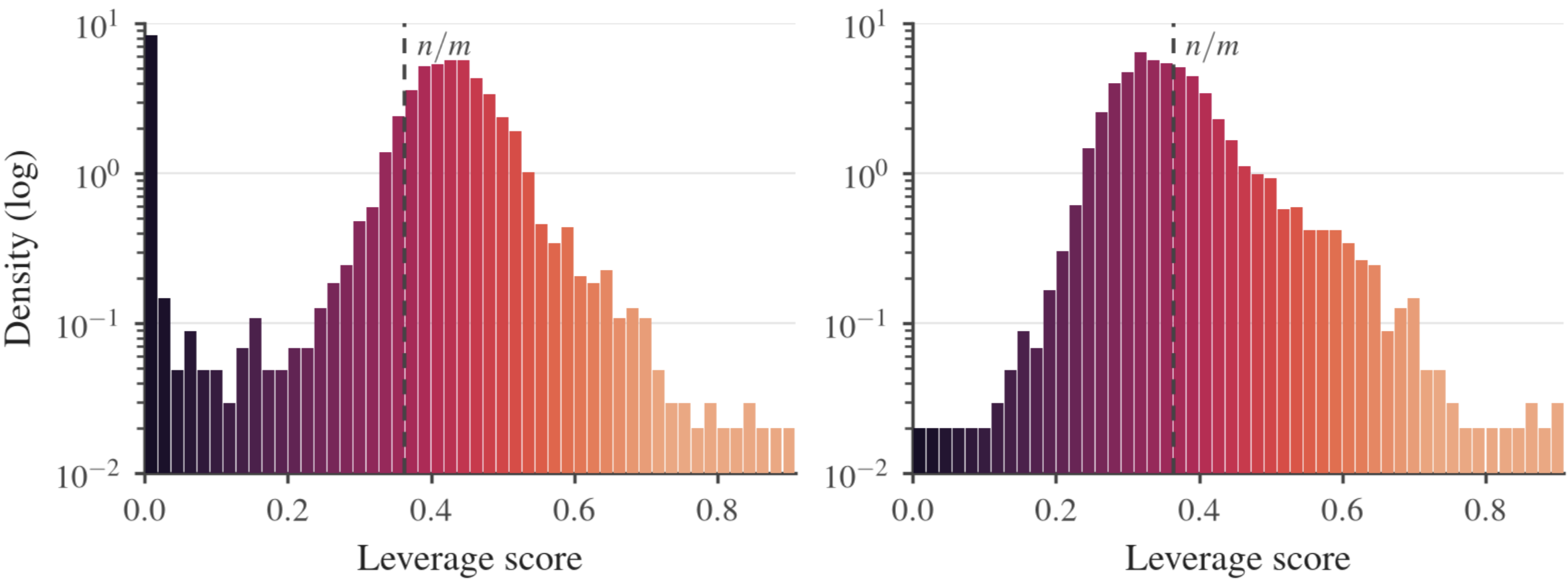}
    \caption{Each square on the grid corresponds to a row in the parameter matrix; darker neurons have smaller row leverage.  The distribution of leverage scores for Muon is bimodal, whereas U-NorMuon leverage scores are isotropic and there are no dead neurons. Leverage score statistics are computed from the momentum buffer at the final step of training.}
    \label{fig:dead_neurons}
\end{figure}

We plot the distribution of leverage scores at different points in training and find that after some initial training instability, distributions for both U-NorMuon and Muon remain mostly static. Our models trained with Muon form a bimodal leverage score distribution with one mode near the uniform value of $n/m$ and the other near zero; this latter mode corresponds to dead neurons. Conversely we find that our models trained with U-NorMuon have leverage scores tightly clustered around $n/m$ (Figure~\ref{fig:dead_neurons}).

\begin{figure}[H]
    \centering
    \includegraphics[width=0.6\linewidth]{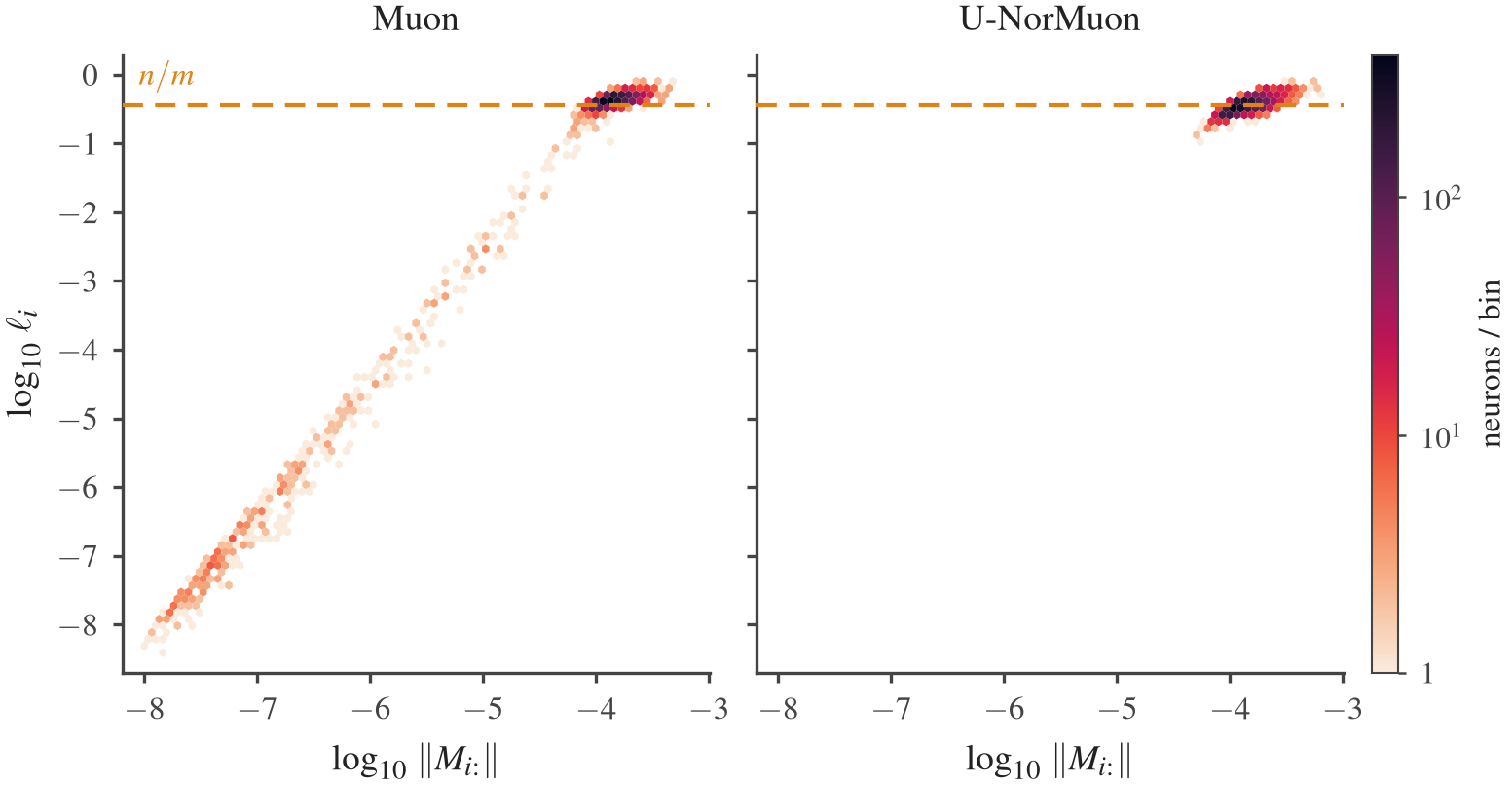}
    \caption{We plot the correlation between the leverage scores and row norms of the momentum buffer at the final step of training. Muon has a long tail of neurons with low update magnitude and leverage, but U-NorMuon does not. The momentum row norms and leverage scores are strongly correlated on a log-log scale.}
    \label{fig:dead_correlation}
\end{figure}

Notably, we find that the row leverage scores and the row norms of the momentum buffer both tend towards the uniform value under U-NorMuon, indicating that the leverage score correction applied by U-NorMuon becomes increasingly small at later steps. In other words, the gradient momentum buffer converges to a stable regime with uniform row norms and leverage as training progresses (Figure~\ref{fig:converge}). 
%U-NorMuon creates a self-reinforcing geometry that fundamentally alters training dynamics.

\begin{figure}[H]
    \centering
    \includegraphics[width=0.8\linewidth]{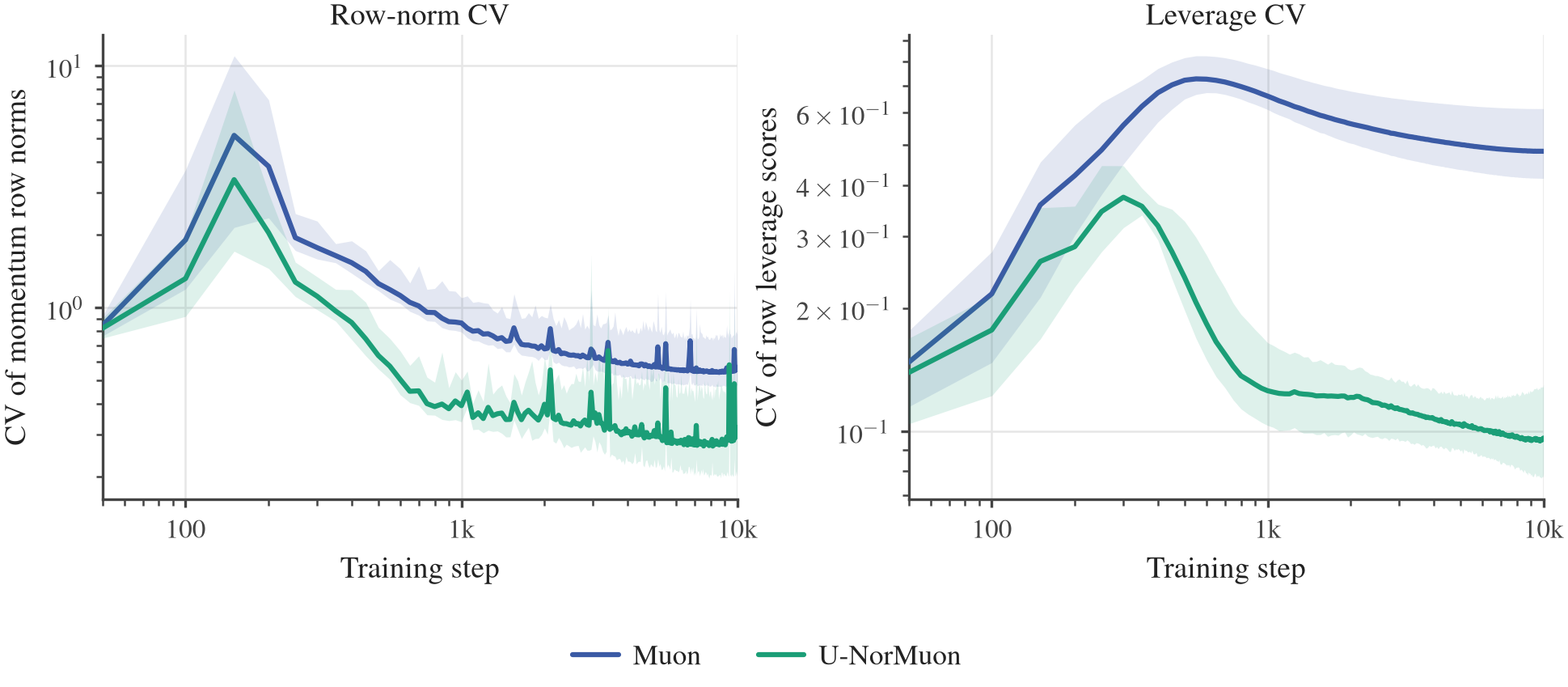}
    \caption{Coefficient of variation (CV) of momentum row norms (left) and row leverage scores (right) for Muon and U-NorMuon (mean $\pm$ standard deviation across layers). U-NorMuon converges to far more uniform row norms and leverage, while Muon's momentum buffer remains highly anisotropic.}

    \label{fig:converge}
\end{figure}

We additionally find that column leverage scores in the down projection matrices tend to be uniform under U-NorMuon but anisotropic under Muon. Down projection matrices are wide and hence receive the same update under both Muon and U-NorMuon. We hypothesize that low-leverage rows in the up/gate projections lead to the corresponding columns in the down projection matrix receiving small gradient signal and hence small updates. %In other words, dead neurons are simultaneously a pathology of the up/gate projection rows and of the down projection columns. 

\subsection{Neuron Death in Open Weight Models}\label{sec:open_weight}
We probe for neuron death in open-weight models to verify that our results transfer to other training settings. We are not aware of any open-weight models trained with Muon that are also open-data, so we evaluate models on a suite of representative open-source datasets. In Section~\ref{sec:death}, we defined a dead component as one receiving persistently small gradient and update signal throughout training; our analysis here, on the other hand, is post-hoc, making it more difficult to distinguish between dead neurons and neurons representing stable features. These and other issues limit our study, and so we present it merely as preliminary verification that our results are transferable to other settings.

We first analyze Rnj-1 from EssentialAI~\cite{rnj1_base} which is an 8.3B dense model trained with Muon. Rnj-1 uses GeGLU~\cite{shazeer2020gluvariants} activation which is not represented in our own experiments but fits cleanly into our more general framework. We freeze the model weights and accumulate gradient momentum for one thousand steps. Every one-hundred steps, we analyze the row-leverage of the current momentum buffer and the average activation norms over the most recent one-hundred-step window. We classify a neuron as \textit{dead with respect to a dataset} if its row leverage is less than $1\%$ of the average per-row leverage (the uniform value $n/m$) in both the up and gate momentum buffers, and its average activation RMS norm is less than $5\%$ of the layer's median. We choose these thresholds because they accurately capture dead neurons in our own experimental setting.

We find that neurons tend to satisfy either both or none of these conditions, except in later layers where we find a non-trivial percentage of neurons with low-leverage but large activation norms.
\begin{figure}[H]
    \centering
    \includegraphics[width=0.55\linewidth]{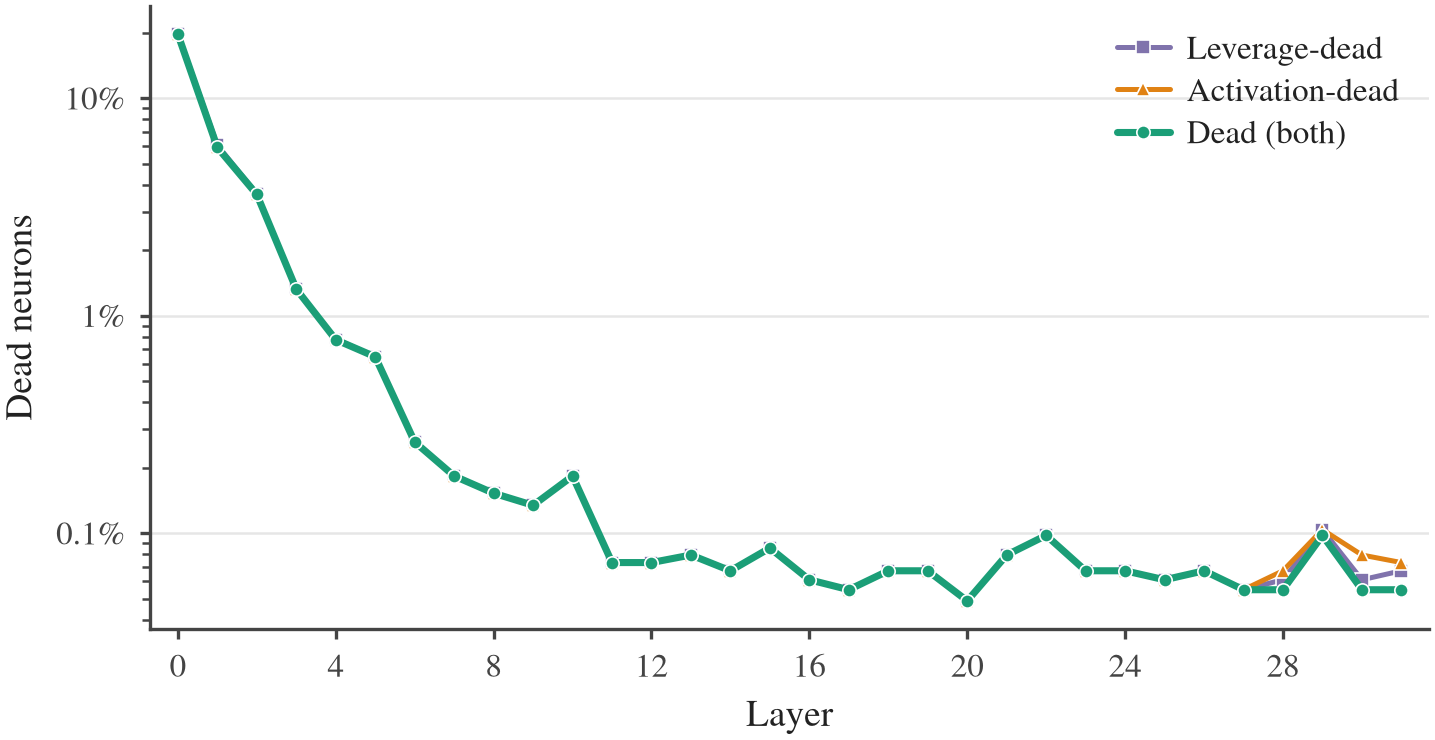}
    \caption{We find a large percentage of dead neurons in Rnj-1 on FineWeb-Edu in early layers. MLPs become more effectively utilized at greater depth, and mostly do not have dead neurons after layer ten. }
    \label{fig:rnj_dead}
\end{figure}

By our criteria, a large portion of Rnj-1's neurons are dead in early layers, but beyond layer ten, the fraction of dead neurons is consistently below $0.1\%$ of the layer's neurons. We verify this finding on a suite of open-source datasets spanning math, code and natural language.

% \begin{table}[H] % [TODO change third column]
% \centering
% \begin{tabular}{lcc}
% \toprule
% Dataset & BPB & Max \% Dead Neurons \\
% \midrule
% FineWeb-Edu & 0.689 & 20\% \\
% NemotronCCv2 & 0.639 & 20\% \\
% UltraMath & 0.215 & 20\%\\
% \bottomrule
% \end{tabular}
% \caption{Evaluating Bits per byte (BPB) and dead neurons in any layer across different open-source datasets. [TODO; this table is uninteresting and will either be deleted or remade with different info instead of col 3]
% }
% \label{tab:bpb_dead_neurons}
% \end{table}

We find that neurons that are dead on one dataset are almost always also dead on the others we test. Of the dead neurons we identified, $99.7\%$ are dead in all cases. This suggests that these neurons make an inherently small contribution to network computation, and are not just specialized to encode features that do not exist in some data types.

\begin{table}[H]
\centering
\begin{tabular}{lccc}
\toprule
Dataset & BPB & Dead neurons & Shared by all 4 \\
\midrule
FineWeb-Edu~\cite{penedo2024fineweb}  & 0.689 & 5633  & 99.8\% \\
NemotronCC v2 HQ split~\cite{nvidia_nemotron_cc_v2_2025} & 0.639 & 5628& 99.9\% \\
Nemotron-CC-Code~\cite{nvidia2025nemotroncccode} & 0.471 & 5626  & 99.9\% \\
UltraData-Math L3-QA~\cite{ultradata_math} & 0.215 & 5627 & 99.9\% \\
\midrule
\multicolumn{4}{l}{\footnotesize Total MLP neurons in Rnj-1: $524{,}288$} \\
\bottomrule
\end{tabular}
\caption{Dead-neuron counts in Rnj-1 for code, math and natural language datasets. The set of detected dead neurons is  consistent across diverse datasets; 99.8\% of dead neurons are shared across all four datasets.}
\label{tab:dead-neuron-overlap}
\end{table}

We additionally analyze the recent Laguna-XS.2 model~\cite{abadji2026laguna}, which is a 33.4B total-3B active Mixture-of-Experts (MoE) model~\cite{shazeer2017outrageously} trained with Muon. Laguna-XS.2 uses a dense MLP in its first layer and MoE layers thereafter; it is fairly common in modern MoE architectures to use an initial dense layer or intersperse dense layers at greater depths~\citep{deepseek2024v3, moonshot2026kimi26}. We choose Laguna as a recent example of a strong open-weight MoE model trained with Muon. %We focus our analysis on the dense layer, which most naturally aligns with our setting. 

% \begin{table}[H]
% \centering
% \begin{tabular}{lrr}
% \toprule
% Category & Count & \% of layer 0 \\
% \midrule
% Activation-dead & 2{,}340 & 28.56\% \\
% Leverage-dead & 2{,}374 & 28.98\% \\
% Both (act-dead $\wedge$ lev-dead) & 2{,}340 & 28.56\% \\
% Leverage-dead only & 34 & 0.42\% \\
% Activation-dead only & 0 & 0.00\% \\
% Either (union) & 2{,}374 & 28.98\% \\
% \bottomrule
% \end{tabular}
% \caption{Overlap between activation-dead and leverage-dead neurons in layer 0.}
% \label{tab:dead-neuron-overlap-layer0}
% \end{table}

\begin{table}[H]
\centering
\begin{tabular}{lcccc}
\toprule
Dataset & BPB & Dead neurons & \% of layer 0 (8192) & Shared by all 3 \\
\midrule
FineWeb-Edu              & 1.1688 & 2341 & 28.58\% & 99.8\% \\
NemotronCC v2 HQ split   & 1.1751 & 2340 & 28.56\% & 99.9\% \\
UltraData-Math L3-QA     & 0.4988 & 2338  & 28.54\% & 100.0\% \\
Nemotron-CC-Code & 0.9124 & 2340 & 28.56\% & 99.9\% \\
\midrule
\multicolumn{5}{l}{\footnotesize Total layer-0 MLP neurons in Laguna-XS.2: 8,192} \\
\bottomrule
\end{tabular}
\caption{Dead-neuron statistics for the first layer of Laguna-XS.2. BPB is within expected range and we find the same cluster of roughly $28\%$ dead neurons exists on all the datasets we test.}
\label{tab:dead_neurons_datasets}
\end{table}
The MoE layers in Laguna-XS.2 use fine-grained experts, which are wide matrix parameters and thus naturally receive uniform-leverage updates under Muon. Our previous theory predicts that these neurons should have high average utilization, and indeed we find this to be the case. We find that roughly 28\% of neurons in the initial dense layer are dead by our criteria. %We discuss fine-grained MoE layers more in Section~\ref{sec:discussion}. 

We conclude that neuron death in dense MLP layers, and more generally, non-uniform updates of tall weight matrices, may be a general problem for models trained with Muon. We suspect the percentage of dead neurons depends strongly on initialization scale and model size. %We leave an analysis of these dependencies for future work. 

\section{Aurora}\label{sec:aurora}
In Sections~\ref{sec:normalization} \&~\ref{sec:death}, we found that models trained with Muon can have a significant number of dead neurons. We showed that this pathology is effectively mitigated by applying row-normalization to Muon updates for tall matrices. We also showed that minimizing polar approximation error improves the downstream performance of Muon and aligns better with Muon's theoretical motivation (Section~\ref{sec:normalization}). We now derive Aurora as a way to more faithfully achieve both of these ideals simultaneously.

\subsection{Augmenting the Muon Objective}

Recall that the Muon update direction can be specified as the solution to the following constrained optimization problem:
\[
U^\star
=
\arg\max_U \operatorname{Tr}(G^\top U)
\qquad
\text{subject to}
\qquad
\|U\|_2 \le \eta .
\]
Here \(G = \nabla_W \mathcal{L}\) is the gradient with respect to a matrix
parameter \(W\) and $\eta$ is the learning rate. We augment the Muon objective for tall matrices
\((m > n)\) with the additional requirement that row norms be equal in the update $U$.
\begin{equation}\label{eq:dual_constraint}
U^\star
=
\arg\max_U \operatorname{Tr}(G^\top U)
\qquad
\text{s.t.}
\qquad
\|U\|_2 \le \eta,
\qquad
\|U_{i,:}\|^2 = \eta^2\frac{n}{m}
\quad
\forall i     
\end{equation}

\begin{prop}\label{claim:constraints}
Let $U \in \mathbb{R}^{m \times n}$ with $m > n$. Suppose every row has squared norm $n/m$, i.e.
\[
    \|U_{i,:}\|_2^2 = \frac{n}{m}
    \qquad \forall i \in [m].
\]
Then $\|U\|_2 \le 1$ if and only if $U^\top U = I_n$. Equivalently, under the uniform row-norm constraint, the spectral norm relaxation
$\|U\|_2 \le 1$ is tight and forces $U$ to have orthonormal columns.
\end{prop}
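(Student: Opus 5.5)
The argument is a trace computation combined with an eigenvalue bound on the Gram matrix $S = U^\top U \in \mathbb{R}^{n\times n}$. The row-norm hypothesis fixes the Frobenius norm exactly:
\[
\operatorname{Tr}(U^\top U) = \|U\|_F^2 = \sum_{i=1}^m \|U_{i,:}\|_2^2 = m\cdot\frac{n}{m} = n .
\]
So the eigenvalues $\lambda_1,\dots,\lambda_n \ge 0$ of $S$ sum to exactly $n$. This is the only place the uniform row-norm constraint enters. Only the total $\sum_i \|U_{i,:}\|^2 = n$ is actually needed, not the uniformity itself.

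For the forward direction, assume $\|U\|_2 \le 1$. Then each $\lambda_j = \sigma_j(U)^2 \le 1$. Together with $\sum_j \lambda_j = n$ and there being exactly $n$ eigenvalues, this forces $\lambda_j = 1$ for every $j$, since any $\lambda_j < 1$ would make the sum strictly less than $n$. A symmetric positive semidefinite matrix whose eigenvalues all equal $1$ is the identity, by the spectral theorem $S = Q\,\mathrm{diag}(\lambda)\,Q^\top = QQ^\top = I_n$. Hence $U^\top U = I_n$.

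For the converse, if $U^\top U = I_n$ then every singular value of $U$ equals $1$. Therefore $\|U\|_2 = 1 \le 1$.

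The "equivalently" clause is a restatement. The feasible set $\{\|U\|_2\le 1\}$ intersected with the row-norm constraints coincides with the Stiefel matrices having those row norms, so the relaxation is tight. The scaled version with $\eta$ in \eqref{eq:dual_constraint} follows by applying the result to $U/\eta$.

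No step is a real obstacle. The one point to state carefully is the spectral-theorem step: the conclusion $S = I_n$ uses that $S$ is symmetric, so that equal eigenvalues give the identity rather than just a matrix with spectrum $\{1\}$. This is also where $m > n$ is relevant: $S$ is $n\times n$ and has exactly $n$ eigenvalues, so the trace count is exact.
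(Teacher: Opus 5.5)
Your proof is correct and follows the paper's argument. The row-norm hypothesis gives $\sum_j \sigma_j(U)^2 = \|U\|_F^2 = n$, and $\|U\|_2 \le 1$ caps each of the $n$ terms at $1$, forcing all of them to equal $1$, so $U^\top U = I_n$; the converse is immediate, and your Gram-matrix/spectral-theorem phrasing just makes the paper's last step explicit.

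One side remark is off: $U^\top U$ is $n \times n$ for every $m$, so $m>n$ is not what makes the trace count exact. In fact the equivalence also holds for $m \le n$ (trivially when $m<n$, since both sides are then false).
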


Thus, by Proposition~\ref{claim:constraints}, the objective in Equation~\ref{eq:dual_constraint} can be rewritten as 
\begin{equation}\label{eq:dual_constraint2}
U^\star
=
\arg\max_U \operatorname{Tr}(G^\top U)
\qquad
\text{s.t.}
\qquad
U^{\top}U = \eta^2 I
\qquad
\|U_{i,:}\|^2_2 = \eta^2\frac{n}{m}
\quad
\forall i     
\end{equation}
The solution is steepest descent under a left semi-orthogonality constraint to make the update RMS-to-RMS norm preserving, and a constant row norm constraint to enforce uniform updates. We derive Aurora as a solution to this problem.

\subsection{Riemannian-Aurora}
We first consider a Riemannian formulation of the dual-constraint objective (Equation~\ref{eq:dual_constraint2}). Instead of enforcing the spectral-norm and leverage constraints separately, we constrain the update $U$ to (approximately) lie on the equal-row-leverage Stiefel manifold.

\[
\mathcal{M}_{m,n}
=
\left\{
U \in \mathbb{R}^{m \times n}
:
U^\top U = \eta^2 I_n,
\quad
\operatorname{diag}(UU^\top)=\eta^2 \frac{n}{m}\mathbf{1}
\right\}.
\]

In our convergence analysis (Theorem~\ref{thm:aurora_converge} and Appendix~\ref{app:riemannian}) we set the step size $\eta=1$ without loss of generality, so that $r=n/m$; the general case simply rescales these constraints by $\eta$. The resulting Riemannian approach projects the Euclidean direction \(G\) onto the tangent space of \(\mathcal{M}\), takes a
step along the projected direction, and then retracts the result back towards 
\(\mathcal{M}_{m, n}\). We present the full derivation for Riemannian-Aurora in Appendix~\ref{app:riemannian}.

\begin{algorithm}[H]
\caption{Riemannian-Aurora}
\label{alg:riemannian_aurora}
\begin{algorithmic}[1]
\Require Momentum \(G \in \mathbb{R}^{m \times n}\), step size \(\eta\), iterations \(T\)
\State Set \(r \gets \eta^2 \frac{n}{m}\)
\State Initialize \(U_0 \in \mathcal{M}\) 
\Comment{e.g. polar projection of \(G\) followed by row/Stiefel projection}
\For{\(t = 0,1,\ldots,T-1\)}
    \State \(B_t \gets \operatorname{sym}(U_t^\top G)\)
    \State \(P_t \gets U_t U_t^\top\)
    \State \(q_t \gets \operatorname{diag}(G U_t^\top) - \operatorname{diag}(U_t B_t U_t^\top)\)
    \State Solve \((r I_m - P_t \odot P_t)\lambda_t = q_t\) \label{step:conjugate_grad}
    \State \(D_t \gets \operatorname{diag}(\lambda_t)\)
    \State \(S_t \gets B_t - U_t^\top D_t U_t\)
    \State \(Z_t \gets G - U_t S_t - D_t U_t\)
    \State \(Y_t \gets U_t + \eta Z_t\)
    \State \(U_{t+1} \gets \operatorname{Retract}(Y_t)\)
\EndFor
\State \Return \(U_T\)
\end{algorithmic}
\end{algorithm}

Unlike post-hoc row normalization, which first computes the polar factor and then
rescales its rows, Riemannian-Aurora treats orthogonality and equal row leverage
as a single joint constraint. Both the initialization of \(U_0\) and retraction
are done approximately via alternately computing the polar factor and row
normalizing the update in our implementation.

Some of the steps in Riemannian-Aurora are particularly expensive, making it
infeasible for use on moderately-sized networks as written. The main expensive operations are the following:
\begin{enumerate}
    \item \textbf{Forming the leverage constraint operator.} The tangent space projection involves solving an \(m\)-dimensional
    linear system whose coefficient matrix involves \(r I - P \odot P\), where \(P = UU^\top\). For
    \(U \in \mathbb{R}^{m \times n}\), explicitly forming \(P\) costs \(O(m^2 n)\)
    operations and storing it costs \(O(m^2)\) memory, which is prohibitive at even moderate scales.

    \item \textbf{Solving for the Lagrange multiplier vector in Line~\ref{step:conjugate_grad}}. The most common way to solve the system in Line~\ref{step:conjugate_grad} is with conjugate gradient (CG), which contains repeated applications of the map $v \mapsto (r I - P \odot P)v$. The term $(P \odot P)v$ couples each row with every other row in $P$, resulting in dense row interactions. Thus, each CG iteration is expensive even when $P$ is not explicitly factorized.
 \end{enumerate}
Due to these limitations, we use Riemannian-Aurora purely as a reference solver against which to compare our more practical instantiation of Aurora.

%Our more practical instantiation of Aurora instead uses an iterative solution, which alternates 
%We present it mostly as a faithful solution to our dual-constraint problem and
%as a comparison point for our practical instantiation of Aurora. We include a
%small-scale verification of Riemannian-Aurora in Appendix~E.

\subsection{Aurora: An Iterative Solution}
Though the joint Stiefel/equal-row-leverage projection has no simple closed form,
it admits a natural iterative approximation which alternates between the two
constraint sets, starting with row normalization, then using an NS iteration to approximate the polar factor. We show that, under a local regularity condition, this iteration converges linearly to an update satisfying both the Stiefel and equal row-norm constraints.

\begin{theorem}[Local linear convergence of Aurora]\label{thm:aurora_converge}
Let $m>n$ and $r=n/m$, and define
\[
\mathcal S
=
\{U\in\mathbb R^{m\times n}: U^\top U=I_n\},
\qquad
\mathcal O_r
=
\{U\in\mathbb R^{m\times n}: \|U_{i,:}\|_2^2=r,\ i=1,\ldots,m\}.
\]
Let $\mathcal M=\mathcal S\cap \mathcal O_r$ and suppose
$U_\star\in\mathcal M$ is regular, meaning that for
$P_\star=U_\star U_\star^\top$,
\[
\ker\!\left(rI_m - P_\star\odot P_\star\right)
=
\operatorname{span}\{\mathbf 1\}.
\]
Then, for every initialization $U_0$ sufficiently close to $U_\star$, the
alternating projection sequence
\[
U_{k+1/2}=\Pi_{\mathcal O_r}(U_k),
\qquad
U_{k+1}=\Pi_{\mathcal S}(U_{k+1/2})
\]
is well-defined and converges linearly to some point
$U_\infty\in\mathcal M$.
\end{theorem}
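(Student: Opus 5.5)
The argument follows the standard local theory of alternating projections between two smooth manifolds that intersect \emph{transversally} (Lewis--Malick; Andersson--Carlsson). Specifically: if $\mathcal S$ and $\mathcal O_r$ are $C^2$ submanifolds near $U_\star$ and meet transversally there, then $\mathcal M$ is a manifold near $U_\star$. The projections are single-valued and smooth in a neighbourhood, and the alternating projections started close enough converge linearly to a point of $\mathcal M$. The rate is governed by the cosine of the largest principal angle between the normal spaces.

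The proof has three steps.

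\textbf{Step 1: local structure of each set.} Both sets are smooth embedded manifolds. $\mathcal S$ is the Stiefel manifold. $\mathcal O_r$ is a product of spheres of radius $\sqrt r$, one for each row. Both projections have closed forms near $U_\star$: $\Pi_{\mathcal S}(Y)=\operatorname{polar}(Y)$, valid whenever $Y$ has full column rank, and $\Pi_{\mathcal O_r}$ rescales each nonzero row to norm $\sqrt r$. Since $U_\star$ has orthonormal columns and nonzero rows, both maps are well-defined and smooth on a neighbourhood of $U_\star$. This gives the well-definedness claim once the iterates are shown to stay in that neighbourhood.

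\textbf{Step 2: normal spaces at $U_\star$.} The two normal spaces are
\[
N_{\mathcal S}=\{U_\star S : S=S^\top\},\qquad N_{\mathcal O_r}=\{DU_\star : D\ \text{diagonal}\}.
\]

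\textbf{Step 3: transversality-type condition.} The condition needed is that the only intersection of the two normal spaces is the one forced by the redundancy between the constraints. Exact transversality fails, because $\operatorname{Tr}(U^\top U)=\sum_i\|U_{i,:}\|^2$ makes one constraint redundant: $D=I$ and $S=I$ give the same normal vector $U_\star$. So I will instead show the two sets intersect \emph{cleanly}, in the sense that
\[
N_{\mathcal S}\cap N_{\mathcal O_r}=\operatorname{span}\{U_\star\}
\quad\text{and}\quad
T_{U_\star}\mathcal M=T\mathcal S\cap T\mathcal O_r.
\]
Clean (nontangential) intersection also suffices for local linear convergence, by Andersson--Carlsson or Noll--Rondepierre.

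To prove $N_{\mathcal S}\cap N_{\mathcal O_r}=\operatorname{span}\{U_\star\}$, suppose $U_\star S=DU_\star$ with $D=\operatorname{diag}(\lambda)$. Multiplying on the left by $U_\star^\top$ gives $S=U_\star^\top D U_\star$, so $DU_\star=P_\star D U_\star$. Multiplying on the right by $U_\star^\top$ and taking diagonals gives
\[
r\lambda=\operatorname{diag}(DP_\star)=\operatorname{diag}(P_\star D P_\star)=(P_\star\odot P_\star)\lambda,
\]
using $\operatorname{diag}(P_\star)=r\mathbf 1$. The regularity hypothesis then forces $\lambda\in\operatorname{span}\{\mathbf 1\}$, so the normal vector is a multiple of $U_\star$.

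The same computation shows that the linearized constraint map has corank exactly one. That is, the defining map $F(U)=(U^\top U-I,\ \operatorname{diag}(UU^\top)-r\mathbf 1)$ has constant rank near $U_\star$ on $\mathcal M$, because the single trace dependency holds identically. The constant rank theorem then makes $\mathcal M$ a manifold with the predicted tangent space.

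\textbf{Convergence.} With clean intersection established, the composite map $T=\Pi_{\mathcal S}\circ\Pi_{\mathcal O_r}$ is smooth near $U_\star$ and fixes $\mathcal M$ pointwise. Its derivative at $U_\star$ is the product of the two tangent-space orthogonal projections. This derivative is the identity on $T\mathcal M$. On the orthogonal complement of $T\mathcal M$ inside $T\mathcal S$ it has norm $c<1$, by the angle condition and finite dimensionality.

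A standard argument then gives linear convergence. Near a manifold of fixed points whose derivative is contractive on the normal directions, $\operatorname{dist}(U_k,\mathcal M)$ decays like $c'^k$ for any $c'\in(c,1)$. The step lengths $\|U_{k+1}-U_k\|$ are bounded by a multiple of this distance, so they are summable and the iterates converge to some $U_\infty\in\mathcal M$. Shrinking the initial neighbourhood keeps all iterates in the region where the projections are smooth.

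\textbf{Main obstacle.} I expect the main difficulty to be handling the non-transversality caused by the trace redundancy cleanly. This requires verifying that $\mathcal M$ is genuinely a manifold of the expected dimension near $U_\star$ via the constant-rank argument, and that the contraction estimate holds on the correct complement of $T\mathcal M$ rather than on the whole normal space.
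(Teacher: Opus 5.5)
Your proposal is correct and follows essentially the same route as the paper: the regularity hypothesis forces $N_{\mathcal S}\cap N_{\mathcal O_r}=\operatorname{span}\{U_\star\}$ (the redundant trace direction), which gives a clean intersection with $T_{U_\star}\mathcal M=T_{U_\star}\mathcal S\cap T_{U_\star}\mathcal O_r$, and the local linear convergence theory for alternating projections between smooth manifolds meeting nontangentially then applies. The differences are minor. You read off $(rI_m-P_\star\odot P_\star)\lambda=0$ directly from the diagonal of $DP_\star=P_\star DP_\star$, where the paper instead uses the identity $\|(I-P_\star)DU_\star\|_F^2=d^\top(rI_m-P_\star\odot P_\star)d$ together with positive semidefiniteness. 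You also supply the constant-rank argument (the trace identity makes $F$ a submersion onto a hyperplane near $U_\star$) and the contraction details that the paper only asserts.
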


Intuitively, the regularity condition in Theorem~\ref{thm:aurora_converge} precludes degenerate intersections of the Stiefel and equal row-norm constraints. Under this assumption, Aurora is locally linearly convergent. We will also validate Aurora's convergence empirically in our pre-training settings (Section~\ref{sec:training}).

\begin{algorithm}[H]
\caption{Aurora}
\label{alg:aurora}
\begin{algorithmic}[1]
\Require Momentum buffer \(M \in \mathbb{R}^{m \times n}\), NS iterations \(K\), damping parameter \(\beta \in (0, 1)\).
\State \(X_0 \gets M / \|M\|_F\)
\State \(D_{-1} \gets I_m\)
\For{\(k = 0,1,\ldots,K-1\)}
    \State \(r_k \gets \left(\|(X_k)_{1,:}\|,\ldots,\|(X_k)_{m,:}\|\right)\)
    \State \(D_k \gets D_{k-1}^{\beta} \cdot \operatorname{diag}(r_k)^{1-\beta}\)
    \State \(\widetilde{X}_k \gets \sqrt{n/m}\, D_k^{-1} X_k\)
    \State \(X_{k+1} \gets \operatorname{polar}(\widetilde{X}_k)\)\label{step:stiefel}
\EndFor
\State \Return \(X_K\)
\end{algorithmic}
\end{algorithm}

Aurora uses a damped iteration, which we find practically useful for improving its convergence. This requires just one additional scalar parameter \(\beta\). Additionally, we note that the last step of the iteration is always a polar factor computation (Step~\ref{step:stiefel}) so that the updates produced by Aurora will (approximately) lie on the Stiefel manifold.

Aurora is roughly \(K\) times more expensive than Muon, where $K$ is the number of inner-loop steps. However, in an optimized distributed training setup, this additional computation can often be effectively overlapped with gradient communication, so the wall-clock overhead is fairly modest. We evaluate the throughput degradation on a single H100 and in our distributed setting in Section~\ref{sec:training}.

% \begin{figure}[H]
%     \centering
%     \includegraphics[width=0.8\linewidth]{new_figs/error.png}
%     \caption{Final row-norm spread for Aurora and Muon on random $512\times128$ tall matrices. We sweep the initial row-norm spread $\sigma$ and spectral anisotropy $\alpha$, and report the coefficient of variation (CV) of the final row norms. Aurora, with $K=3$ and $\beta=0.8$, maintains substantially more uniform row norms than a single Muon polar step across the full sweep.}
%     \label{fig:row_spread}
% \end{figure}

\section{Experiments}\label{sec:training}
%We empirically evaluate Aurora on synthetic tasks and language model pre-training. We compare against MuonEq-R and NorMuon as well as vanilla Muon all using CANS-12. 

%\subsection{Language Modelling Benchmarks}\label{sec:pretrain}

\textbf{Experiment setup.} For all experiments, we use a transformer architecture with gated attention, RMSNorm on the query and key vectors and short convolution in the attention layers. All models are pretrained on 100B tokens from the high-quality split of the NemotronCC v2 dataset with a context length of 4096. We compare Aurora against Muon, U-NorMuon and NorMuon at 340M and 1.1B parameter scales\footnote{Model parameter counts do not include embedding parameters and we use tied embeddings.}. In all cases, we use a batch size of 4M tokens and we use our own tokenizer which uses a vocabulary size of 128K.

% \begin{table}[H]
% \centering
% \begin{tabular}{lccccc}
% \toprule
% Models & Layers & Hidden Size & GQA & Positional Encoding & Additional Features \\
% \midrule
% Aurora-340M & 24 & 1024 & 4:1 & RoPE & QKNorm, ShortConv, Gated Attention \\
% Aurora-1.1B & 24 & 2048 & 4:1 & RoPE & QKNorm, ShortConv, Gated Attention \\
% \bottomrule
% \end{tabular}
% %\caption{We use a .}
% \label{tab:model_architectures}
% \end{table}

\textbf{Evaluation and benchmarks}. We evaluate on the following general knowledge and reasoning tasks. MMLU~\cite{hendrycks2021mmlu} (5-shot), HellaSwag (HSwag)~\cite{zellers2019hellaswag} (10-shot), PIQA~\cite{bisk2020piqa} (0-shot), WinoGrande (Wino.)~\citep{sakaguchi2020winogrande} (5-shot), ARC-Challenge (ARC-C)~\cite{clark2018arc} (25-shot) and ARC-Easy (ARC-E), LAMBADA (LMB)~\citep{paperno2016lambada} (0-shot), and OpenBookQA (OBQA)~\citep{mihaylov2018openbookqa} (0-shot).

\begin{table}[H]
\centering
\resizebox{\textwidth}{!}{
\begin{tabular}{llccccccccc}
\toprule
Model Size & Optimizer
& \begin{tabular}{c}ARC-C\\acc$\uparrow$\end{tabular}
& \begin{tabular}{c}ARC-E\\acc$\uparrow$\end{tabular}
& \begin{tabular}{c}HSwag\\acc$\uparrow$\end{tabular}
& \begin{tabular}{c}LMB.\\acc$\uparrow$\end{tabular}
& \begin{tabular}{c}MMLU\\acc$\uparrow$\end{tabular}
& \begin{tabular}{c}OBQA\\acc$\uparrow$\end{tabular}
& \begin{tabular}{c}PIQA\\acc$\uparrow$\end{tabular}
& \begin{tabular}{c}Wino.\\acc$\uparrow$\end{tabular}
& \begin{tabular}{c}Avg\\$\uparrow$\end{tabular} \\
\midrule
\multirow{4}{*}{340M}
& Muon      & 33.6 & \textbf{60.2} & 49.1 & 40.2 & 24.7 & 34.6 & 70.7 & 51.7 & 45.6 \\
& NorMuon   & 34.6 & 56.9 & 49.4 & 39.1 & \textbf{25.7} & 34.2 & 70.4 & \textbf{54.9} & 45.7 \\
& U-NorMuon & 34.0 & 56.6 & 49.9 & 41.5 & 25.4 & 33.4 & 70.9 & \textbf{54.9} & 45.8 \\
& Aurora    & \textbf{35.3} & 56.8 & \textbf{50.7} & \textbf{43.5} & 24.0 & \textbf{35.6} & \textbf{71.2} & 54.4 & \textbf{46.4} \\
\midrule
\multirow{4}{*}{1.1B}
& NorMuon   & 45.7 & 60.9 & \textbf{68.6} & 55.1 & 30.1 & 39.4 & 76.2 & 64.8 & 55.1 \\
& U-NorMuon & 43.6 & 67.4 & 67.4 & \textbf{56.9} & 29.1 & \textbf{40.6} & \textbf{77.1} & 61.9 & 55.5 \\
& Muon      & 46.0 & \textbf{70.6} & 67.2 & 55.7 & 29.7 & 39.2 & 76.3 & 63.1 & 56.0 \\
& Aurora    & \textbf{46.9} & 67.9 & 68.4 & 54.5 & \textbf{38.8} & 39.2 & 76.9 & \textbf{64.9} & \textbf{57.2} \\
\bottomrule
\end{tabular}
}
\caption{Comparison of Muon, NorMuon, U-NorMuon and Aurora on general understanding evaluations. The best result for each model scale is bolded. Aurora achieves the strongest average score at both scales with an MMLU score 9.1 points higher than Muon at the 1.1B scale. These results differ from those reported in the original blog post due to improved hyperparameter settings.}
\label{tab:downstream_eval}
\end{table}

% \begin{figure}[H]
%     \centering
%     \includegraphics[width=0.8\linewidth]{figures/1B_pretraining.png}
%     \caption{1.1B pretraining loss. Aurora (green) converges faster and reaches a lower final loss than all Muon and NorMuon baselines. All runs train for 24k steps.}
%     \label{fig:1B_pretraining}
% \end{figure}

\paragraph{Throughput Comparison.} We report single-GPU throughput to isolate the local optimizer overhead of Aurora relative to Muon. This overhead reflects the additional computation and memory traffic from Aurora's row-balancing steps. In a distributed setting, we find that much of this overhead can be overlapped with gradient communication, making the wall-clock overhead due to Aurora relatively small. %We report throughput figures from our 4-node 1.1B training runs.

\begin{table}[H]
\centering
\sisetup{
  detect-weight=true,
  detect-inline-weight=math,
  table-number-alignment=center
}
\begin{tabular}{llS[table-format=2.1]S[table-format=2.1]}
\toprule
\multirow{2}{*}{Model Size}
& \multirow{2}{*}{Optimizer}
& \multicolumn{2}{c}{Throughput (k tok/s/GPU)} \\
\cmidrule(lr){3-4}
& & {1 GPU} & {8 GPUs} \\
\midrule
\multirow{3}{*}{340M}
  & Muon          & 56.3   & 88.2   \\
  & Aurora ($K=2$) & 47.3   & 86.5   \\
  & Aurora ($K=3$) & 41.8   & 86.1   \\
\midrule
\multirow{3}{*}{1.1B}
  & Muon          & 23.4   & 36.1 \\
  & Aurora ($K=2$) & 17.7   & 35.8 \\
  & Aurora ($K=3$) & 14.7   & 35.5 \\
\bottomrule
\end{tabular}
\caption{Training throughput for Muon and Aurora variants on a single H100 GPU and in our one-node training setting. All experiments use a sequence length of 2048 tokens and a 4M-token batch size. Throughput figures are averages across the training run after an initial warm-up period of 50 steps.}
\label{tab:throughput_single_gpu_one_node}
\end{table}

% on our 340M and 1.1B transformer configurations on a single H100 to measure the per-device overhead due to Aurora. In an optimized distributed implementation, much of this additional compute 

%We report single-GPU throughput to isolate the local optimizer overhead of Aurora relative to Muon. This overhead reflects the additional computation and memory traffic from Aurora's row-balancing steps, independent of distributed communication effects.

%\subsection{\texttt{modded-NanoGPT} and Synthetic Results}
%[TODO: I don't think this really makes sense because we're marketing aurora as for up/gate proj specifically.]
% We train a 2-layer MLP with ReLU for three epochs on CIFAR-10 as a small-scale verification of Riemannian-Aurora.  Importantly, we apply 
% We find that Riemannian-Aurora significantly outperforms the tuned AdamW baseline and performs comparably to Muon. Unlike Muon, it leads to much more uniform row leverage scores [TODO: need a leverage scores plot alongisde Fig 8].

% \begin{figure}[H]
%     \centering
%     \includegraphics[width=0.7\linewidth]{figures/cifar10.png}
%     \caption{For Riemannian-Aurora hyperparameters, we use $K=20$ conjugate gradient iterations, Nesterov Momentum, lr$=4\mathrm{e}{-3}$, 2 retraction steps, a Riemannian learning rate of $\eta=0.1$. We AdamW, we use a learning rate of $4\mathrm{e}{-3}$ with betas $(0.9, 0.95)$. We found these parameters to be approximately optimal after a hyperparameter sweep.}
%     \label{fig:placeholder}
% \end{figure}

\textbf{Aurora Convergence.} To evaluate the accuracy of the Aurora projection, we sample momentum buffers at different points in our 1.1B Muon training run and plot the precision of the resulting updates. Both Aurora and Riemannian-Aurora produce very accurate solutions to the dual-constraint problem after just three iterations, but the Riemannian solver tends to produce updates more aligned with the raw gradient direction.

\begin{figure}[H]
    \centering
    \includegraphics[width=0.5\linewidth]{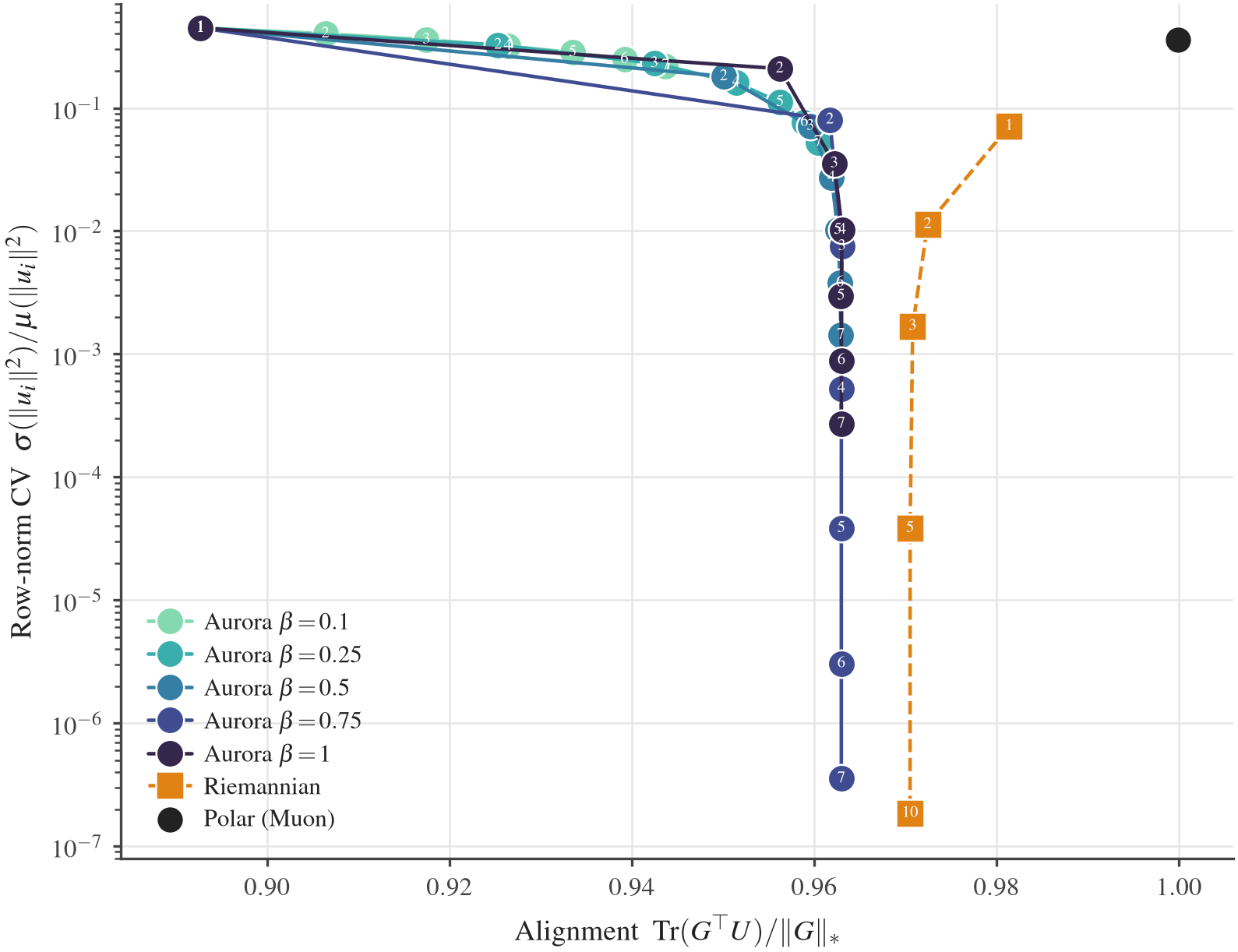}
    \caption{Comparison of the alignment and row-norm coefficient-of-variation trade-off for Aurora, Riemannian-Aurora, and standard Muon. All updates lie on the Stiefel manifold by construction. Data is computed on a real momentum buffer from our 1.1B Muon training run at step 100.}
    \label{fig:solver}
\end{figure}

\textbf{\texttt{modded-nanoGPT} Optimizers Track.} We additionally evaluate Aurora on the \texttt{modded-nanoGPT} speedrun benchmark (Track 3), comparing against the current state-of-the-art and the vanilla NorMuon baseline. We find that vanilla Aurora ($K=2$, $\beta=0.5$) improves upon the vanilla NorMuon baseline, reaching the target validation loss in 25 fewer steps. Combining Aurora with Contra-Muon~\cite{nilin2026contramuon} and update/weight flooring yielded a new state-of-the-art result at the time of writing.

\begin{figure}[H]
    \centering
    \includegraphics[width=0.6\linewidth]{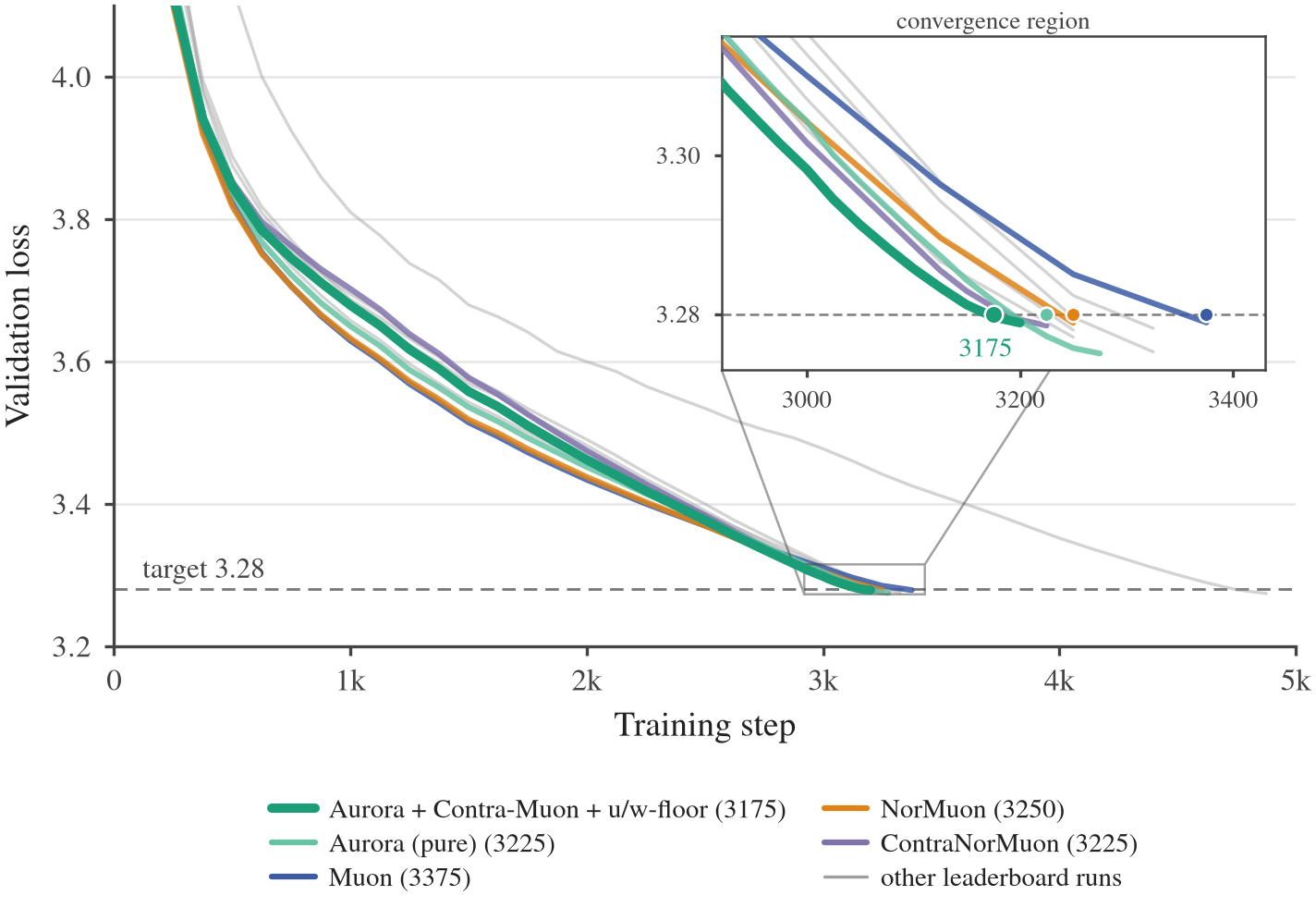}
    \caption{\texttt{modded-nanoGPT} speedrun convergence curves. Aurora and Contra-Muon reaches the 3.28 validation loss target at step 3175, setting a new state-of-the-art on the optimization track.}
    \label{fig:nanogpt}
\end{figure}

\paragraph{Training wide MLPs.} We experiment with training MLPs of different widths using both Muon and Aurora. The base architecture is our 340M transformer, which uses an MLP expansion factor of four; all other training and architecture settings are held fixed across runs. We find that the advantage of Aurora over Muon grows monotonically with MLP expansion factor (Figure~\ref{fig:mlp_exp}), suggesting that Aurora is particularly effective for wide MLPs.

\begin{figure}[H]
    \centering
    \includegraphics[width=0.6\linewidth]{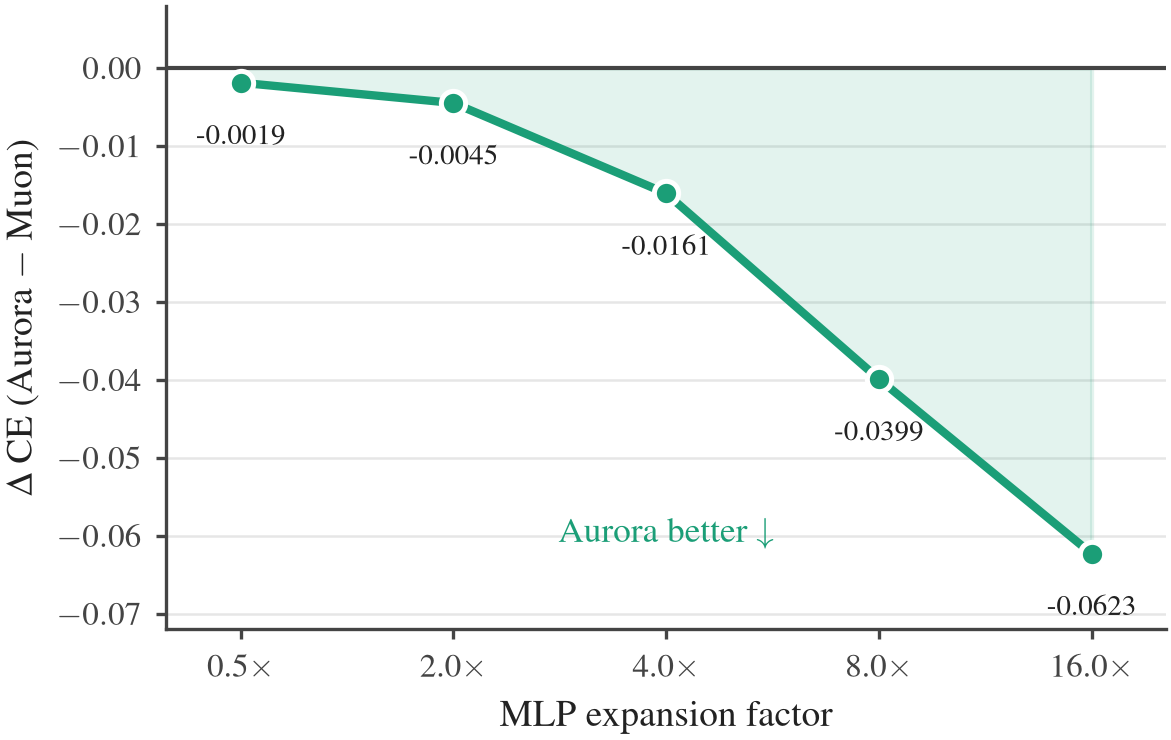}
    \caption{Aurora's advantage over Muon scales with MLP expansion factor. We train a model with each expansion factor using both Muon and Aurora, plotting the gap in final smoothed loss. We stress that each of these models has a different parameter count.}%, and our comparison is between models of the same scale.}
    \label{fig:mlp_exp}
\end{figure}

\section{Discussion and Future Work}\label{sec:discussion}
We derived Aurora to resolve a specific pathology in language model training which we first observed empirically and then understood mechanistically. We showed that Aurora effectively prevents neuron death while still adhering faithfully to Muon's theoretical framing. We believe this type of \textit{bottom-up} optimizer design, whereby algorithms are derived or modified based on observations from empirical training dynamics, is an interesting direction to pursue more generally. Many existing optimizers are motivated purely by theoretical models which may not accurately capture real language model training dynamics. 

Additionally, Aurora was designed to optimize specifically the tall matrices in the model, such as the up and gate projections. We believe optimizer design at this level of granularity is a  promising emerging area of research, with clear advantages over the standard ``one-size-fits-all'' strategy. This \textit{architecture-optimizer codesign} incorporates the observation that different classes of parameters can play very different roles in network computation, and should receive updates that are aware of this fact.

We stress that Aurora's modification to Muon can be implemented so that the additional computation provides very little wall-clock overhead. We believe there are a number of routes to improving Aurora, including improved solutions for the row-oblique Stiefel projection problem, and various practical performance enhancements. For example, we observed that under Aurora, the momentum buffer tends towards a stable region in which its rows have nearly equal row leverage. It may be possible to (adaptively) switch to Muon when training has reached this stable regime. We leave these as directions for future research.

\bibliographystyle{plainnat}
\bibliography{biblio}

\appendix

% \section{340M Training Settings}\label{app:340m}
% We use the following settings for our 340M experiments with Muon.
% \begin{table}[H]
% \centering
% \begin{tabular}{p{0.28\linewidth}p{0.65\linewidth}}
% \toprule
% Category & Details \\
% \midrule
% Training Configuration 
% & 800k tokens per batch, 2048 token sequence length, WSD schedule \\

% Data 
% & 20B tokens of NemotronCC v2 from the high quality split \\

% Hyperparameters 
% & \(\mathrm{lr}=8\mathrm{e}{-3}\) \\

% Model Architecture 
% & \texttt{hidden\_size}=1024, layers=24, GQA 4:1, RoPE, QKNorm, ShortConv, Gated Attention \\
% \bottomrule
% \end{tabular}
% \end{table}

\section{U-NorMuon Ablation Study}\label{app:u_normuon}
We ablate all the modifications U-NorMuon applies onto NorMuon individually. We use the term \textit{stateful} to refer to variants that maintain a buffer of row norms as in NorMuon. Variants using polar scaling only do so for the row-normalized parameters; all other non-embedding matrix parameters receive spectral scaling in all cases. We tune two sets of hyperparameters: one for the polar-scaled variants and another for the spectral-scaled optimizers. 

% \begin{table}[H]
% \centering
% \begin{tabular}{lccccc}
% \toprule
% Variant
% & Stateful
% & Scaling
% & Parameters
% & Validation Loss $\downarrow$
% & $\Delta$ loss $\uparrow$
% \\
% \midrule
% Muon 
% & --
% & -- 
% & --
% & 2.591
% & --
% \\
% Polar, stateless, all
% & \xmark
% & Polar
% & All
% & 2.568
% & --
% \\
% Polar, stateful, up/gate
% & \cmark
% & Polar
% & Up/gate
% & 2.561
% & --
% \\
% Polar, stateful, all
% & \cmark
% & Polar
% & All
% & 2.577
% & --
% \\
% Spectral, stateless, up/gate
% & \xmark
% & Spectral
% & Up/gate
% & 2.563
% & --
% \\
% Spectral, stateless, all
% & \xmark
% & Spectral
% & All
% & 2.575
% & --
% \\
% Stateful and spectral
% & \cmark
% & Spectral
% & Up/gate
% & 2.569
% & 0.002
% \\
% Spectral, stateful, all
% & \cmark
% & Spectral
% & All
% & 2.564
% & --
% \\
% U-NorMuon
% & \xmark
% & Polar
% & Up/gate
% & \textbf{2.567}
% & \textbf{0.000}
% \\
% \bottomrule
% \end{tabular}
% \caption{
% Ablation over the design axes. $\Delta$ loss is computed relative to U-NorMuon, so positive values indicate worse final loss.
% }
% \label{tab:four_axis_ablation}
% \end{table}

\begin{table}[H]
\centering
\begin{tabular}{lccccc}
\toprule
Variant & Stateful & Scaling & Parameters & Validation Loss $\downarrow$ & $\Delta$ loss vs. Muon \\
\midrule
Muon                         & --           & Spectral & All     & 2.591          & 0.000  \\
NorMuon                      & \checkmark   & Spectral & All     & 2.577          & -0.014 \\
Polar, stateful, all          & \checkmark   & Polar    & All     & 2.577          & -0.014 \\
Spectral, stateless, all      & \xmark       & Spectral & All     & 2.575          & -0.016 \\
Spectral, stateful, up/gate         & \checkmark   & Spectral & Up/gate & 2.569          & -0.022 \\
Polar, stateless, all         & \xmark       & Polar    & All     & 2.568          & -0.023 \\
Spectral, stateful, all       & \checkmark   & Spectral & All     & 2.564          & -0.027 \\
Spectral, stateless, up/gate  & \xmark       & Spectral & Up/gate & 2.563          & -0.028 \\
Polar, stateful, up/gate     & \checkmark       & Polar    & Up/gate & 2.562          & -0.029 \\
U-NorMuon                     & \xmark   & Polar    & Up/gate & \textbf{2.561} & \textbf{-0.030} \\
\bottomrule
\end{tabular}
\caption{Ablation over U-NorMuon design axes at 340M scale. Loss differences are computed relative to Muon.}
\label{tab:ablation_design_axes}
\end{table}

In all cases, we use a smoothed validation loss which we find to be very predictive of benchmark scores at larger scales. U-NorMuon outperforms all the other algorithms we ablate. U-NorMuon does not maintain row magnitude state, only row-normalizes tall matrices, and uses polar scaling on the tall matrix branch.

Row-normalizing only the up and gate projections yielded the largest improvement. Using polar scaling in NorMuon (i.e. polar scaling and row normalization on all parameters, maintaining row-norm states) did not improve final validation loss, but polar scaling led to a fairly large improvement in the stateless version of NorMuon. Maintaining row-norm state was generally not helpful except when doing spectral scaling and normalizing all parameters.

\section{Analysis of Networks Using ReLU$^2$}\label{app:relu}
We repeat our 340M experiments from Section~\ref{sec:death} using a network with the ReLU$^2$ activation for MLPs. The results closely mirror those from our 340M experiments with SwiGLU MLPs. In particular, we find consistently uniform leverage scores under U-NorMuon and a bimodal leverage distribution under Muon across steps. 

\begin{figure}[H]
    \centering
    \includegraphics[width=0.7\linewidth]{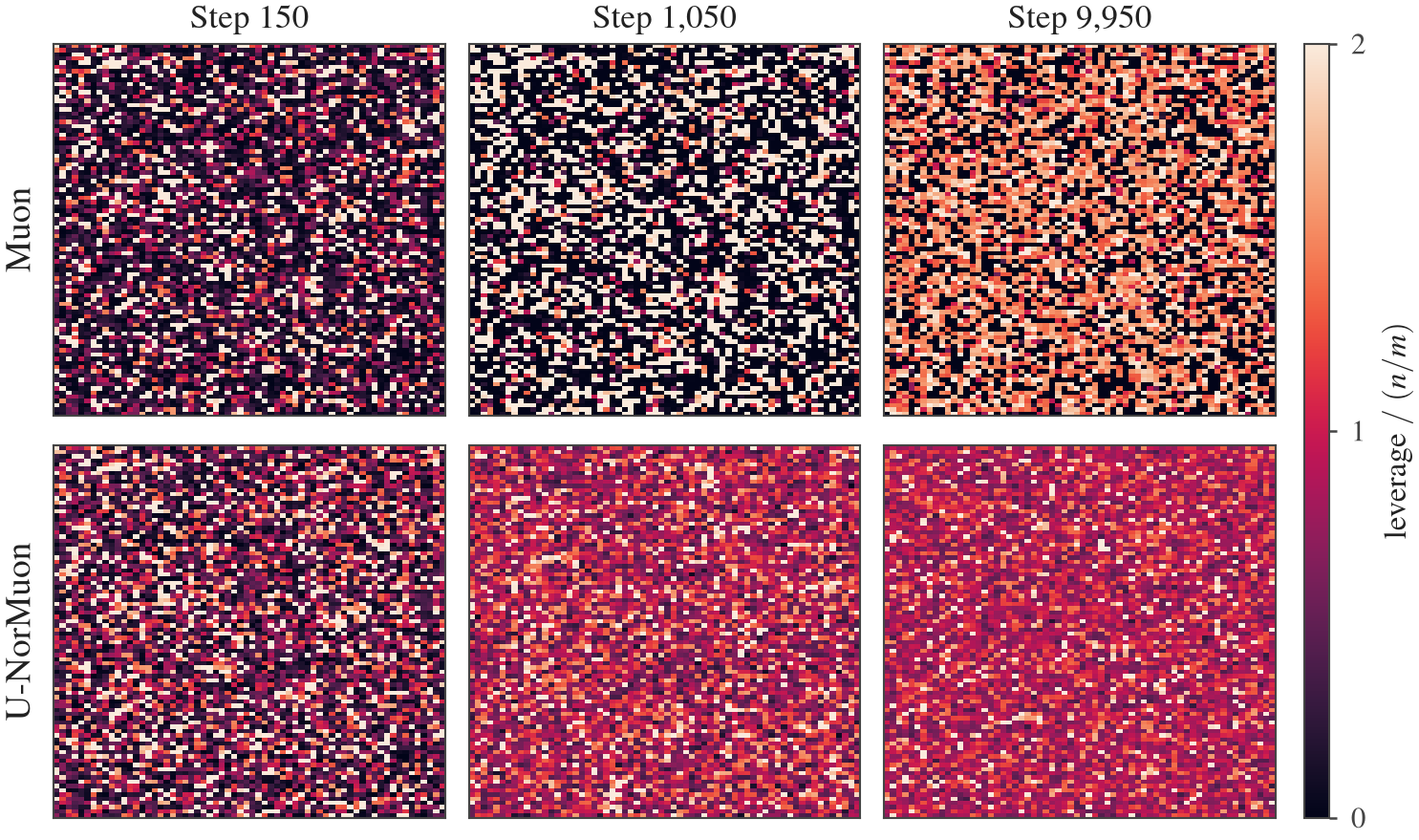}
    \caption{Leverage scores for all 5632 neurons (up projection, layer 12) in a ReLU$^2$ MLP from our 1.1B Muon and U-NorMuon runs. The same pattern of early, permanent death under Muon and healthy leverage under U-NorMuon is observed.}
    \label{fig:relu2_grid}
\end{figure}
As before, we see uniform leverage scores under U-NorMuon and a large cohort of dead neurons under Muon. Additionally, we find leverage scores persist throughout training, with the lowest-leverage neurons collapsing and never recovering. These results give empirical evidence that our main results hold for the general classes of gated MLPs, wherein coordinate-wise nonlinearities lead to momentum row norm anisotropy. 

\section{Derivation of Riemannian-Aurora}\label{app:riemannian}
We want to solve
\[
\max_{U \in \mathbb R^{m \times n}} \langle G, U\rangle
\qquad
\text{s.t.}
\qquad
U^\top U = I_n,
\qquad
\operatorname{diag}(UU^\top) = \frac{n}{m}\mathbf 1 .
\]
Let \(r := n/m\). The feasible set is the equal-row-norm Stiefel manifold
\[
\mathcal M
=
\left\{
U \in \mathbb R^{m \times n}
:
U^\top U = I_n,\;
\|U_{i,:}\|_2^2 = r \;\; \forall i
\right\}.
\]

\paragraph{Tangent space.}
Let \(Z \in \mathbb R^{m \times n}\) be a tangent vector at \(U \in \mathcal M\). The Stiefel constraint requires
\((U+\epsilon Z)^\top (U+\epsilon Z)=I_n+O(\epsilon^2)\) for small \(\epsilon>0\). Keeping only first-order terms gives
\[
U^\top Z + Z^\top U = 0.
\]
The row-norm constraint requires \(\|U_{i,:}+\epsilon Z_{i,:}\|_2^2=r+O(\epsilon^2)\) for all \(i\). Expanding to first order gives \(\langle U_{i,:},Z_{i,:}\rangle=0\) for all \(i\), or equivalently $\operatorname{diag}(ZU^\top)=0$. Therefore,
\[
\boxed{
T_U\mathcal M
=
\left\{
Z \in \mathbb R^{m \times n}
:
U^\top Z + Z^\top U = 0,\;
\operatorname{diag}(ZU^\top)=0
\right\}
}
\]

\paragraph{Riemannian gradient.}
We compute the Riemannian gradient by projecting \(G\) onto the tangent space:
\[
Z
=
\arg\min_{\widetilde Z}
\|\widetilde Z - G\|_F^2
\quad
\text{s.t.}
\quad
U^\top \widetilde Z + \widetilde Z^\top U = 0,
\qquad
\operatorname{diag}(\widetilde Z U^\top)=0.
\]

The projected vector has the form \(Z=G-US-DU\), where \(S=S^\top\in\mathbb R^{n\times n}\) and \(D=\operatorname{diag}(\lambda)\in\mathbb R^{m\times m}\). The \(US\) term enforces the Stiefel tangent constraint, while the \(DU\) term enforces the row-norm tangent constraint. Define
\[
B
:=
\operatorname{sym}(U^\top G)
=
\frac{1}{2}(U^\top G + G^\top U),
\qquad
P := UU^\top .
\]

We impose the Stiefel tangent constraint \(U^\top Z+Z^\top U=0\). Substituting \(Z=G-US-DU\), we get
\[
U^\top(G-US-DU) + (G-US-DU)^\top U = 0.
\]
Using \(U^\top U=I_n\), \(S=S^\top\), and \(D=D^\top\), this becomes
\[
U^\top G + G^\top U - 2S - 2U^\top D U = 0.
\]
Therefore,
\[
S
=
\operatorname{sym}(U^\top G) - U^\top D U
=
B - U^\top D U.
\]

Now impose the row-norm tangent constraint \(\operatorname{diag}(ZU^\top)=0\). Using \(Z=G-US-DU\), this gives
\[
\operatorname{diag}(GU^\top)
-
\operatorname{diag}(USU^\top)
-
\operatorname{diag}(DUU^\top)
=
0.
\]
Since \(\operatorname{diag}(UU^\top)=r\mathbf 1\) and \(D=\operatorname{diag}(\lambda)\), we have
\[
\operatorname{diag}(DUU^\top)=r\lambda.
\]
Substituting \(S=B-U^\top D U\) into \(\operatorname{diag}(USU^\top)\) gives
\[
\operatorname{diag}(USU^\top)
=
\operatorname{diag}(UBU^\top)
-
\operatorname{diag}(UU^\top D UU^\top).
\]
With \(P=UU^\top\), this becomes
\[
\operatorname{diag}(USU^\top)
=
\operatorname{diag}(UBU^\top)
-
\operatorname{diag}(PDP).
\]
Since \(D=\operatorname{diag}(\lambda)\),
\[
\operatorname{diag}(PDP)
=
(P\odot P)\lambda.
\]
Therefore,
\[
\operatorname{diag}(USU^\top)
=
\operatorname{diag}(UBU^\top)
-
(P\odot P)\lambda.
\]

Substituting this into the row constraint gives,
\[
\operatorname{diag}(GU^\top)
-
\left[
\operatorname{diag}(UBU^\top)
-
(P\odot P)\lambda
\right]
-
r\lambda
=
0.
\]
Rearranging,
\[
(rI - P\odot P)\lambda
=
\operatorname{diag}(GU^\top)
-
\operatorname{diag}(UBU^\top).
\]

After solving this linear system for \(\lambda\), define \(D=\operatorname{diag}(\lambda)\) and \(S=B-U^\top D U\). The tangent projection is then
\[
\boxed{
Z = G - US - DU
}
\]

\section{Proofs from the Main Text}\label{sec:proofs}

\begin{proof}[Proof of Proposition~\ref{claim:tall}]
First, suppose $A \in \mathbb R^{m_A \times n_A}$ is column-orthogonal, so $A^\top A = I_{n_A}$. Then
\[
\|A\|_F^2
=
\operatorname{tr}(A^\top A)
=
\operatorname{tr}(I_{n_A})
=
n_A.
\]
On the other hand,
\[
\|A\|_F^2
=
\sum_{i=1}^{m_A} \|A_{i,:}\|_2^2.
\]
If every row of $A$ had unit norm, then this sum would equal $m_A$. Thus we would have $m_A=n_A$, contradicting $m_A>n_A$. Therefore a tall column-orthogonal matrix cannot have unit row norms.

Now suppose $B \in \mathbb R^{m_B \times n_B}$ is row-orthogonal, so $BB^\top = I_{m_B}$. Then
\[
\|B\|_F^2
=
\operatorname{tr}(BB^\top)
=
\operatorname{tr}(I_{m_B})
=
m_B.
\]
Also,
\[
\|B\|_F^2
=
\sum_{j=1}^{n_B} \|B_{:,j}\|_2^2.
\]
If every column of $B$ had unit norm, then this sum would equal $n_B$. Thus we would have $n_B=m_B$, contradicting $m_B<n_B$. Therefore a wide row-orthogonal matrix cannot have unit column norms.

Finally, suppose $C \in \mathbb R^{d \times d}$ is orthogonal, so $C^\top C=I_d$. Then the columns of $C$ are orthonormal, so every column has unit norm. Since $C$ is square and invertible, $C^\top C=I_d$ also implies $CC^\top=I_d$, so the rows of $C$ are orthonormal as well. Hence every row and every column of $C$ has unit norm.
\end{proof}

\begin{proof}[Proof of Proposition~\ref{claim:stateless}]
Let $O_t \in \mathbb{R}^{m \times n}$. By definition,
\[
r_{t,i}
=
\operatorname{RMS}\!\left((O_t)_{i,:}\right)
=
\left(
\frac{1}{n}
\sum_{j=1}^n (O_t)_{ij}^2
\right)^{1/2}.
\]
Since $r_{t,i} > 0$ for every row, the normalized update
\[
(\hat O_t)_{i,:}
=
\frac{(O_t)_{i,:}}{r_{t,i}}
\]
is well-defined. Therefore, for each row $i$,
\[
\operatorname{RMS}\!\left((\hat O_t)_{i,:}\right)
=
\left(
\frac{1}{n}
\sum_{j=1}^n
\left(
\frac{(O_t)_{ij}}{r_{t,i}}
\right)^2
\right)^{1/2}.
\]
Pulling out the constant factor gives
\[
\operatorname{RMS}\!\left((\hat O_t)_{i,:}\right)
=
\frac{1}{r_{t,i}}
\left(
\frac{1}{n}
\sum_{j=1}^n
(O_t)_{ij}^2
\right)^{1/2}
=
\frac{r_{t,i}}{r_{t,i}}
=
1.
\]
Thus every row of $\hat O_t$ has RMS equal to $1$. Hence
\[
\frac{1}{n}
\sum_{j=1}^n
(\hat O_t)_{ij}^2
=
1
\]
for every $i$, so summing over all rows yields,
\[
\sum_{i=1}^m
\sum_{j=1}^n
(\hat O_t)_{ij}^2
=
mn.
\]
Therefore the global RMS of $\hat O_t$ is
\[
\operatorname{RMS}(\hat O_t)
=
\left(
\frac{1}{mn}
\sum_{i=1}^m
\sum_{j=1}^n
(\hat O_t)_{ij}^2
\right)^{1/2}
=
\left(
\frac{mn}{mn}
\right)^{1/2}
=
1.
\]
\end{proof}

\begin{proof}[Proof of Proposition~\ref{claim:leverage}]
Assume that $M$ has full column rank, so that $\sigma_n(M) > 0$. Let
\[
M = U \Sigma V^\top
\]
be the thin SVD of $M$, where $U \in \mathbb{R}^{m \times n}$ has orthonormal columns,
$\Sigma \in \mathbb{R}^{n \times n}$ is diagonal with singular values
\[
\sigma_1(M) \geq \cdots \geq \sigma_n(M) > 0,
\]
and $V \in \mathbb{R}^{n \times n}$ is orthogonal. Then
\[
\operatorname{polar}(M) = U V^\top.
\]
Therefore, since multiplication by $V^\top$ preserves Euclidean row norms,
\[
\|\operatorname{polar}(M)_{i,:}\|_2
=
\|(U V^\top)_{i,:}\|_2
=
\|U_{i,:} V^\top\|_2
=
\|U_{i,:}\|_2.
\]

On the other hand, the $i$th row of $M$ is
\[
M_{i,:}
=
U_{i,:}\Sigma V^\top.
\]
Again using the orthogonality of $V$, we have
\[
\|M_{i,:}\|_2
=
\|U_{i,:}\Sigma V^\top\|_2
=
\|U_{i,:}\Sigma\|_2.
\]
Since every singular value of $\Sigma$ is at least $\sigma_n(M)$,
\[
\|U_{i,:}\Sigma\|_2
\geq
\sigma_n(M)\|U_{i,:}\|_2.
\]
Hence
\[
\|U_{i,:}\|_2
\leq
\frac{\|M_{i,:}\|_2}{\sigma_n(M)}.
\]
If $\|M_{i,:}\|_2 < \epsilon$, then
\[
\|\operatorname{polar}(M)_{i,:}\|_2
=
\|U_{i,:}\|_2
<
\frac{\epsilon}{\sigma_n(M)}.
\]
Squaring both sides gives
\[
\ell_i(M)
:=
\|\operatorname{polar}(M)_{i,:}\|_2^2
<
\frac{\epsilon^2}{\sigma_n(M)^2}.
\]
Therefore,
\[
\ell_i(M)
\leq
\frac{\epsilon^2}{\sigma_n(M)^2},
\]
up to replacing the strict inequality by a non-strict one.
\end{proof}

\begin{proof}[Proof of Proposition~\ref{claim:constraints}]
Let the singular values of $U$ be
\[
\sigma_1(U) \geq \sigma_2(U) \geq \cdots \geq \sigma_n(U) \geq 0.
\]
Since every row of $U$ has squared Euclidean norm $n/m$, we have
\[
\|U\|_F^2
=
\sum_{i=1}^m \|U_{i,:}\|_2^2
=
\sum_{i=1}^m \frac{n}{m}
=
n.
\]
Equivalently,
\[
\sum_{j=1}^n \sigma_j(U)^2 = n.
\]

First suppose that $\|U\|_2 \leq 1$. Then every singular value satisfies $\sigma_j(U) \leq 1$. Therefore
\[
\sum_{j=1}^n \sigma_j(U)^2 \leq n.
\]
But we already know that this sum is exactly $n$, so equality must hold. Since each
$\sigma_j(U)^2 \leq 1$ and there are $n$ terms whose sum is $n$, we must have
\[
\sigma_j(U)^2 = 1
\qquad
\text{for every } j \in [n].
\]
Hence all singular values of $U$ are equal to $1$. Therefore $U^\top U = I_n$. Conversely, suppose that $U^\top U = I_n$.
Then the columns of $U$ are orthonormal, so every singular value of $U$ is equal to
$1$. In particular, $\|U\|_2 = 1 \leq 1$.
Thus, under the constraint $\|U_{i,:}\|_2^2 = n/m$ for every $i$, we have
\[
\|U\|_2 \leq 1
\qquad\Longleftrightarrow\qquad
U^\top U = I_n.
\]
\end{proof}

\subsection{Proof of Theorem~\ref{thm:aurora_converge}}
\begin{proof}
We first record the tangent and normal spaces of the two constraint manifolds.
For the Stiefel manifold
\[
\mathcal S = \{U \in \mathbb R^{m \times n} : U^\top U = I_n\},
\]
the tangent space at $U_*$ is
\[
T_{\mathcal S}(U_*)
=
\{Z \in \mathbb R^{m \times n} : U_*^\top Z + Z^\top U_* = 0\},
\]
and the normal space is
\[
N_{\mathcal S}(U_*)
=
\{U_* A : A = A^\top\}.
\]
For the row-norm manifold
\[
\mathcal O_r
=
\{U \in \mathbb R^{m \times n} : \|U_{i,:}\|_2^2 = r,\ i=1,\dots,m\},
\]
the tangent space at $U_*$ is
\[
T_{\mathcal O_r}(U_*)
=
\{Z \in \mathbb R^{m \times n} : \langle Z_{i,:}, (U_*)_{i,:}\rangle = 0,\ i=1,\dots,m\},
\]
and the normal space is
\[
N_{\mathcal O_r}(U_*)
=
\{D U_* : D \in \mathbb R^{m \times m} \text{ diagonal}\}.
\]

Let $P_* = U_* U_*^\top$. Since $U_* \in \mathcal M = \mathcal S \cap \mathcal O_r$, we have
\[
(P_*)_{ii} = \|(U_*)_{i,:}\|_2^2 = r
\qquad
\text{for every } i.
\]

We now show that the stated regularity condition implies clean intersection.
Suppose
\[
X \in N_{\mathcal S}(U_*) \cap N_{\mathcal O_r}(U_*).
\]
Then there exist a symmetric matrix $A$ and a diagonal matrix $D = \operatorname{diag}(d)$ such that
\[
X = U_* A = D U_*.
\]
In particular,
\[
(I - P_*) D U_* = 0.
\]
Taking squared Frobenius norms gives
\[
0
=
\|(I - P_*)D U_*\|_F^2.
\]
Expanding,
\[
\|(I - P_*)D U_*\|_F^2
=
\operatorname{Tr}\!\left(U_*^\top D(I-P_*)D U_*\right).
\]
Using $P_* = U_*U_*^\top$, this becomes
\[
\|(I - P_*)D U_*\|_F^2
=
\operatorname{Tr}(D P_* D)
-
\operatorname{Tr}(D P_* D P_*).
\]
Since $(P_*)_{ii}=r$, we have
\[
\operatorname{Tr}(D P_* D)
=
r \sum_{i=1}^m d_i^2,
\]
and
\[
\operatorname{Tr}(D P_* D P_*)
=
\sum_{i,j=1}^m d_i d_j (P_*)_{ij}^2.
\]
Therefore
\[
0
=
d^\top \left(r I_m - P_* \odot P_*\right)d,
\]
where $\odot$ denotes the Hadamard product. Since
\[
r I_m - P_* \odot P_*
\]
is positive semidefinite, this implies
\[
d \in \ker(r I_m - P_* \odot P_*).
\]
By the regularity assumption,
\[
\ker(r I_m - P_* \odot P_*) = \operatorname{span}\{\mathbf 1\}.
\]
Hence $d = c\mathbf 1$ for some scalar $c$, and so $D = c I_m$. Thus
\[
X = D U_* = c U_*.
\]
Therefore the only common normal direction is the radial direction $U_*$,
which corresponds to the redundant constraint
\[
\operatorname{Tr}(U^\top U)
=
\sum_{i=1}^m \|U_{i,:}\|_2^2
=
n.
\]
Consequently, after removing this redundant constraint, the two manifolds
$\mathcal S$ and $\mathcal O_r$ intersect cleanly at $U_*$. Equivalently,
in a neighborhood of $U_*$,
\[
T_{\mathcal M}(U_*)
=
T_{\mathcal S}(U_*) \cap T_{\mathcal O_r}(U_*).
\]

Since $\mathcal S$ and $\mathcal O_r$ are smooth embedded submanifolds and
intersect cleanly at $U_*$, the standard local convergence theorem for
alternating projections between smooth manifolds applies. Hence there exists
a neighborhood $\mathcal U$ of $U_*$ such that, for any $U_0 \in \mathcal U$,
the iterates
\[
U_{k+1/2} = \Pi_{\mathcal O_r}(U_k),
\qquad
U_{k+1} = \Pi_{\mathcal S}(U_{k+1/2})
\]
are well-defined and converge linearly to some point
\[
U_\infty \in \mathcal S \cap \mathcal O_r = \mathcal M.
\]

The projections are well-defined locally because every row of $U_*$ has norm
$\sqrt r > 0$, so the projection onto $\mathcal O_r$ is locally given by
row-wise normalization,
\[
\left(\Pi_{\mathcal O_r}(X)\right)_{i,:}
=
\sqrt r\,\frac{X_{i,:}}{\|X_{i,:}\|_2},
\]
and because matrices sufficiently close to $U_*$ have full column rank, so
the projection onto $\mathcal S$ is locally given by the polar factor,
\[
\Pi_{\mathcal S}(X) = \operatorname{polar}(X).
\]
Therefore the alternating projection sequence is locally well-defined and
converges linearly to a point in $\mathcal M$.
\end{proof}
\end{document}